\def\BibTeX{{\rm B\kern-.05em{\sc i\kern-.025em b}\kern-.08em
		T\kern-.1667em\lower.7ex\hbox{E}\kern-.125emX}}
\newcommand*{\indep}{%
	\mathbin{%
		\mathpalette{\@indep}{}%
	}%
}
\newcommand*{\nindep}{%
	\mathbin{
		\mathpalette{\@indep}{\not}
	}%
}
\newcommand*{\@indep}[2]{%
	\sbox0{$#1\perp\m@th$}
	\sbox2{$#1=$}
	\sbox4{$#1\vcenter{}$}
	\rlap{\copy0}
	\dimen@=\dimexpr\ht2-\ht4-.2pt\relax
	\kern\dimen@
	{#2}%
	\kern\dimen@
	\copy0 
}
\newcommand{\eat}[1]{}
\newcommand{\etal}{{et al.~}}       
\newcommand{\ie}{{i.e.,~}}           
\newcommand{\wrt}{{wrt.,~}}         
\newtheorem{definition}{Definition}
\newtheorem{problem}{Problem}
\newtheorem{theorem}{Theorem}
\newtheorem{proof}{Proof}
\begin{document}
	
	\title{Disentangled Latent Representation Learning for Tackling the Confounding M-Bias Problem in Causal Inference}
	
\author{
	\IEEEauthorblockN{Debo Cheng$^{\ddagger\dagger}$, Yang Xie$^{\P\dagger}$, Ziqi Xu$^{\ddagger\dagger}$, Jiuyong Li$^{\ddagger*}$, Lin Liu$^\ddagger$, Jixue Liu$^{\ddagger}$,  Yinghao Zhang$^{\P}$ and Zaiwen Feng$^{\P*}$}
	\IEEEauthorblockA{$^\P$ College of Informatics, Huazhong Agricultural University, Wuhan, China}
	\IEEEauthorblockA{$^\ddagger$ UniSA STEM, University of South Australia, Adelaide, Australia}
	}

\maketitle

\let\thefootnote\relax\footnote{\noindent $^\dagger$These authors contributed equally.}
\let\thefootnote\relax\footnote{$^*$Corresponding authors: J. Li (Jiuyong.Li@unisa.edu.au) and Z. Feng (Zaiwen.Feng@mail.hzau.edu.cn).}
	
 	
\begin{abstract}
	In causal inference, it is a fundamental task to estimate the causal effect from observational data. However, latent confounders pose major challenges in causal inference in observational data, for example, confounding bias and $M$-bias.  Recent data-driven causal effect estimators tackle the confounding bias problem via balanced representation learning, but assume no $M$-bias in the system, thus they fail to handle the $M$-bias. 
	In this paper, we identify a challenging and unsolved problem caused by a variable that leads to confounding bias and $M$-bias simultaneously. To address this problem with co-occurring $M$-bias and confounding bias, we propose a novel \underline{D}isentangled \underline{L}atent \underline{R}epresentation learning framework for learning latent representations from proxy variables for unbiased \underline{C}ausal effect \underline{E}stimation (DLRCE) from observational data. Specifically, DLRCE learns three sets of latent representations from the measured proxy variables to adjust for the confounding bias and $M$-bias. Extensive experiments on both synthetic and three real-world datasets demonstrate that DLRCE significantly outperforms the state-of-the-art estimators in the case of the presence of both confounding bias and $M$-bias. 
	\end{abstract}
	
	\begin{IEEEkeywords}
		Causal Inference, Causal Effect Estimation, Confounding Bias, $M$-bias, Disentangled Representation Learning, Latent Confounders 
	\end{IEEEkeywords}
	
	\section{Introduction}
	\label{sec:intro}
	Causal effect estimation is an important approach to understand the underlying causal mechanisms of problems in various areas, such as economics~\cite{imbens2015causal,rubin2005causal}, epidemiology~\cite{greenland2003quantifying}, medicine~\cite{connors1996effectiveness} and computer science~\cite{pearl2009causality,guo2020survey}. Randomised control trials (RCTs) are the gold standard for assessing causal effects, but conducting RCTs can often be infeasible or impractical due to ethical considerations, high costs, or time constraints~\cite{imbens2015causal,deaton2018understanding}. Therefore, estimating causal effects from observational data has emerged as an important alternative strategy of RCTs~\cite{imbens2015causal,guo2020survey,cheng2022data}. 
		
The presence of confounding bias, caused by confounders, creates challenges when using observational data for causal effect estimation~\cite{imbens2015causal,pearl2009causality}. A confounder is a variable that influences both the treatment variable, denoted as $W$, and the outcome variable, denoted as $Y$. Many works~\cite{shalit2017estimating,hassanpour2019counterfactual} consider all measured variables, denoted as $\mathbf{X}$ as the set of confounders as shown in Fig.~\ref{fig:simpledags} (a), in which $\mathbf{X}$ causes both $W$ and $Y$ simultaneously in the causal DAG\footnote{A DAG (directed acyclic graph) is a graph with directed edges only and contains no cycles. More details of graph terminologies can be found in~\cite{pearl2009causality}.}. To address the confounding bias, various methods that employ covariate adjustment~\cite{pearl2009causality,cheng2022data} or  confounding balancing~\cite{shalit2017estimating,athey2018approximate} have been developed, under the unconfoundedness assumption\footnote{The unconfoundedness assumption holds when there are no unmeasured confounders for each pair measured variables~\cite{rubin1974estimating,imbens2015causal}.}~\cite{imbens2015causal,guo2020survey}. 
	For example, Shalit et al.\cite{shalit2017estimating} proposed a representation learning based counterfactual regression framework.
	
	\begin{figure}
		\centering
		\includegraphics[scale=0.426]{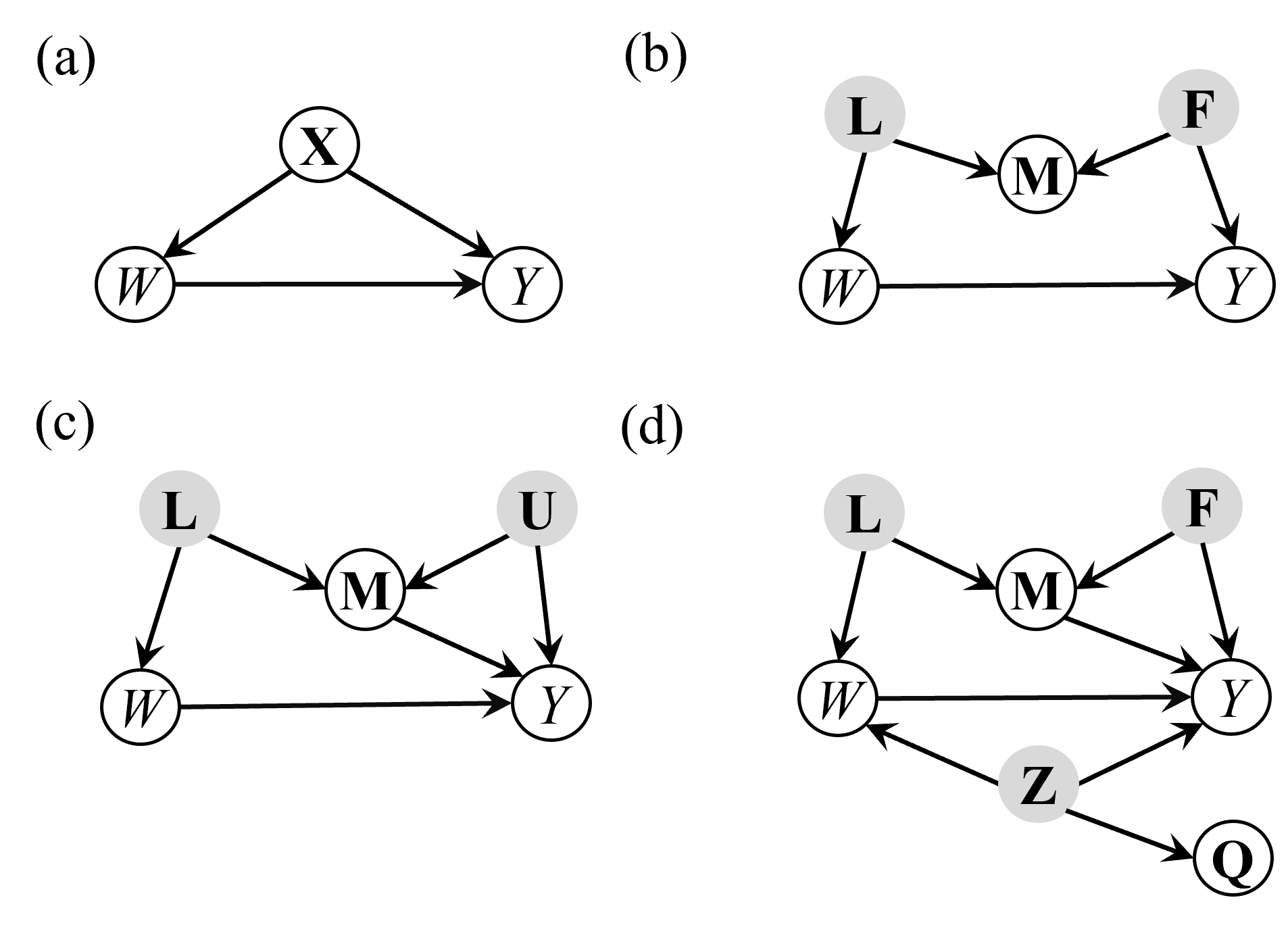}
		\caption{In the figure, $\mathbf{L}$, $\mathbf{F}$  and $\mathbf{Z}$ each represent a set of latent variables (indicated by shaded circles), and the other variables are measured.  $W$ and $Y$ are the treatment and outcome variables respectively. $\mathbf{X}$ is the set of covariates, $\mathbf{M}$ is a set of $M$-bias variables, and $\mathbf{Q}$ is a set of proxy variables of $\mathbf{Z}$. Note that $\mathbf{M}$ is also a set of proxies of $\mathbf{L}$ and $\mathbf{F}$.  In the four causal DAGs,  (a) a simple case with confounding bias caused by $\mathbf{X}$ w.r.t. $(W, Y)$; (b) an illustration of $M$-structure: $W\leftarrow \mathbf{L}\rightarrow \mathbf{M}\leftarrow \mathbf{F}\rightarrow Y$. Conditioning on $\mathbf{M}$ results in $M$-bias \wrt  $(W, Y)$; (c) A DAG illustrating the problem identified in this work, $\mathbf{M}$ serves as a set of $M$-bias variables and a set of confounders, w.r.t. $(W, Y)$; (d) the DAG showing the problem studied in this work.} 
		\label{fig:simpledags}
	\end{figure} 
	
	Nevertheless,  some variables are unmeasured or unobserved due to various uncontrollable factors  in many real-world applications~\cite{perkovic2017complete,van2018separators}, consequently, the unconfoundedness assumption is violated due to the presence of the latent variables. These latent variables result in not only confounding bias, but also $M$-bias. $M$-bias is introduced by conditioning on a variable that is caused by two latent variables.  We call a measured variable an $M$-bias variable if it is a direct effect variable of two or more latent variables, forming an ``$M$-structure''. For example, in Fig.~\ref{fig:simpledags} (b), $\mathbf{M}$ is a set of measured variables and is the set of $M$-bias variables since $\mathbf{L}$ and $\mathbf{F}$ are two sets of latent variables and they are direct causes of $\mathbf{M}$.   
	 In this case, when $\mathbf{M}$ is considered as confounders such that it is adjusted for estimating the causal effect of $W$ on $Y$, a spurious association between $W$ and $Y$ occurs since the path $W \leftarrow \mathbf{L} \rightarrow \mathbf{M}\leftarrow \mathbf{F} \rightarrow Y$ is opened when $\mathbf{M}$ is given (conditioned on). The spurious association along the path causes a biased estimation  \wrt $(W, Y)$. Such a bias is known as $M$-bias in causal effect literature~\cite{greenland2003quantifying,pearl2009myth,ding2015adjust}.

	Excluding the $M$-bias variable in an adjustment set is a common way of dealing with $M$-bias~\cite{greene2003econometric,pearl2009myth,pearl2009causality,cheng2022data}. 
	For example, Enter \etal~\cite{entner2013data} and Cheng \etal~\cite{cheng2022local} use an anchor node to perform conditioning independence/dependence tests for identifying valid adjustment sets to exclude the $M$-bias variable from an adjustment set. 
	
	Dealing with $M$-bias becomes complex when a variable acts as both an $M$-bias variable and a confounder, and we call the problem confounding $M$-bias problem. We call a variable acting both as an $M$-bias variable and a confounder a \emph{confounding $M$-bias variable}, and the problem with a confounding $M$-bias variable the \emph{confounding $M$-bias problem}, and Fig.~\ref{fig:simpledags} (c) shows an example of the problem.   $\mathbf{M}$ is a set of $M$-bias variables based on path $W \leftarrow \mathbf{L} \rightarrow \mathbf{M} \leftarrow \mathbf{F} \rightarrow Y$ and a set of confounders based on path $W \leftarrow \mathbf{L} \rightarrow \mathbf{M} \rightarrow Y$. Using statistical methods, such as Entner \etal's~\cite{entner2013data} and Cheng \etal's~\cite{cheng2022local}, whether adjusting for $\mathbf{M}$ or not, leads to a biased causal effect of $W$ on $Y$. There is no immediate solution to the confounding M-bias problem but the problem is real. We substantiate the example in Fig.~\ref{fig:simpledags} (c) by letting $W$ be `Study time', $Y$ be `Academic performance', $M$ be `Personal interests', $L$ be `Personal experience', and $F$ be `IQ'. When `Personal experience' and `IQ' are unmeasured, it is impossible to estimate the causal effect of `Study time' on `Academic performance' since `Personal interests' is both an $M$-bias variable and a confounder.     
	
	In this paper, we will solve the problem by leveraging the representation learning technique to recover the information of latent variables, and then using observed variables and the learned representations to unbiasedly estimate causal effect in the presence of confounding $M$-bias variables. 
	
	 The causal graph of the problem that is considered in this paper is shown in Fig.~\ref{fig:simpledags} (d). To make our solution covers a broader range of practical scenarios, we also consider the latent confounders whose proxies are observed. For example, in Fig.~\ref{fig:simpledags} (d), if the unobserved confounder $Z$ represents `Teaching quality', and $Q$ represents `Schools', then $Q$ can be used as the proxy of $Z$.

	 In summary, this paper makes the following contributions:  
	
	\begin{itemize}
		\item We identify the confounding $M$-bias problem in causal effect estimation, which is a realistic problem. The problem has not been identified or studied previously. 
		\item We propose a solution, the DLRCE (\underline{D}isentangled \underline{L}atent \underline{R}epresentation learning for unbiased \underline{C}ausal effect \underline{E}stimation) algorithm to resolve the confounding M-bias problem. To the best of our knowledge, there are no solutions to this problem. Furthermore, we prove the soundness of the solution.
		\item We conduct an empirical evaluation to assess the performance of the proposed algorithm on both synthetic and real-world datasets, in comparison to state-of-the-art methods. The experimental results reveal that the proposed algorithm effectively mitigates confounding bias and handles $M$-bias,  and demonstrate its superior performance compared to the baseline methods.   
	\end{itemize}
	
	\section{Preliminaries}
	\label{sec:pre}
	Throughout the paper, we use uppercase and lowercase letters to denote variables and their values, respectively. We use bold-faced uppercase and lowercase letters to represent a set of variables and their corresponding values, respectively.
	
	A graph is a Directed Acyclic Graph (DAG) when it consists of directed edges (represented by $\rightarrow$) and does not contain cycles. In this paper, we use $\mathcal{G}=(\mathbf{V, E})$ to denote a DAG, where $\mathbf{V}=\mathbf{X}\cup\mathbf{U}\cup\{W, Y\}$ represents the set of nodes, 
\ie  $\mathbf{X}$ the set of measured variables, $\mathbf{U}$ the set of latent confounders, $W$ the treatment variable, and $Y$ the outcome variable, and $\mathbf{E}\subseteq \mathbf{V} \times \mathbf{V}$ indicates the set of directed edges. 
	
	In DAG $\mathcal{G}$, two nodes are \textit{adjacent} when there exists a directed edge $\rightarrow$ between them. In a causal DAG, a directed edge $X_i \rightarrow X_j$ signifies that variable $X_i$ is a cause of variable $X_j$ and $X_j$ is an effect variable of $X_i$.  A path $\pi$ from $X_i$ to $X_k$ is a directed or causal path if all edges along it are directed towards $X_k$. If there is a directed path $\pi$ from $X_i$ to $X_k$, $X_i$ is known as an ancestor of $X_k$ and $X_k$ is a descendant of $X_i$. The sets of ancestors and descendants of a node $X$ are denoted as $An(X)$ and $De(X)$, respectively.
	
	A causal DAG $\mathcal{G}=(\mathbf{V, E})$ is employed to represent the underlying causal mechanism of a system. The following presented Markov property and faithfulness assumptions are often assumed in causal inference with a causal DAG.  
	
	\begin{definition}[Markov property~\cite{spirtes2000causation,pearl2009causality}]
		\label{Markov condition}
		Given a DAG $\mathcal{G}=(\mathbf{V}, \mathbf{E})$ and the joint probability distribution $P(\mathbf{V})$, $\mathcal{G}$ satisfies the Markov property if for $\forall V_i \in \mathbf{V}$, $V_i$ is probabilistically independent of all of its non-descendants in $P(\mathbf{V})$, given the parent nodes of $V_i$.
	\end{definition}
	
	\begin{definition}[Faithfulness~\cite{spirtes2000causation}]
		\label{Faithfulness}
		Given a DAG $\mathcal{G}=(\mathbf{V}, \mathbf{E})$ and the joint probability distribution $P(\mathbf{V})$, $\mathcal{G}$ is faithful to a joint distribution $P(\mathbf{V})$ over $\mathbf{V}$ if and only if every independence present in $P(\mathbf{V})$ is entailed by $\mathcal{G}$ and satisfies the Markov property. A joint distribution $P(\mathbf{V})$ over $\mathbf{V}$ is faithful to $\mathcal{G}$ if and only if $\mathcal{G}$ is faithful to $P(\mathbf{V})$.
	\end{definition}

	When the Markov property and faithfulness are satisfied, the dependency/independency relations between variables in the probability distribution $P(\mathbf{V})$ can be inferred from the corresponding causal DAG $\mathcal{G}$~\cite{pearl2009causality,spirtes2000causation}. To determine the conditional independence relationships implied by $\mathcal{G}$, Pearl introduced a graphical criterion, named d-separation.

	\begin{definition}[d-separation~\cite{pearl2009causality}]
		\label{d-separation}
		A path $\pi$ in a DAG $\mathcal{G}=(\mathbf{V}, \mathbf{E})$ is said to be d-separated (or blocked) by a set of nodes $\mathbf{S}$ if and only if
		(i) $\pi$ contains a chain $V_i \rightarrow V_k \rightarrow V_j$ or a fork $V_i \leftarrow V_k \rightarrow V_j$ such that the middle node $V_k$ is in $\mathbf{S}$, or
		(ii) $\pi$ contains a collider $V_k$ such that $V_k$ is not in $\mathbf{S}$ and no descendant of $V_k$ is in $\mathbf{S}$.
		A set $\mathbf{S}$ is said to d-separate $V_i$ from $V_j$ ($ V_i \Vbar V_j\mid\mathbf{S}$) if and only if $\mathbf{S}$ blocks every path between $V_i$ to $V_j$. 
		Otherwise, they are said to be d-connected by $\mathbf{S}$, denoted as $V_i \nVbar V_j\mid\mathbf{S}$.
	\end{definition}

	In this work, we assume that the set $\mathbf{X}$ contains pretreatment variables, \ie all variables in $\mathbf{X}$ are measured before the treatment $W$ is applied and the outcome $Y$ is measured. Based on the potential outcome framework~\cite{rosenbaum1983central,rubin1979using,imbens2015causal}, for an individual, with respect to a binary treatment, there are two potential outcomes, $Y(W=1)$ and $Y(W=0)$ respectively. $Y(W=w)$ is the observed outcome when the treatment $W$ is equal to $w$. Note that, for an individual, we can only observe one of $Y(W=1)$ and $Y(W=0)$ relative to the factual treatment we have applied. The unobserved potential outcome is known as the counterfactual outcome~\cite{rosenbaum1983central,rubin1979using,imbens2015causal}. The individual treatment effect (ITE) for  $i$ is defined as:
	\begin{equation}
		\label{eq:ite}
		ITE_i = Y_i(W=1) - Y_i(W=0)
	\end{equation} 
	
	The average treatment effect (ATE) of $W$ on $Y$ at the population level is defined as:
	
	\begin{equation}
		\label{eq:ate}
		ATE(W, Y)= \mathbb{E}[Y_i(W=1) - Y_i(W=0)]
	\end{equation} 
	\noindent where $\mathbb{E}$ indicates the expectation function. In graphical causal modelling, the ATE is defined as the following using ``do'' operation  introduced by Pearl~\cite{pearl2009causality}, and defined as: 	
	
	\begin{equation}
		\label{eq:ate_do}	
		\begin{aligned}
		&ATE(W, Y) =  \\ & \mathbb{E}[Y\mid do(W=1)] -\mathbb{E}[Y\mid do(W=0)]  
		\end{aligned}
	\end{equation}
	
	\noindent where $do(\cdot)$ is the do-operator.
	
	The ITE defined in Eq.(\ref{eq:ite})  cannot be obtained from data directly since only one potential outcome is observed for an individual. Instead, a number of data-driven methods have been developed for ATE estimation from data. To estimate the causal effect of $W$ on $Y$ unbiasedly from observational data, covariate adjustment~\cite{pearl2009causality,de2011covariate,perkovic2017complete} and confounding balance~\cite{shalit2017estimating} are commonly used method for eliminating the confounding bias. It is critical to discover an adjustment set to eliminate the confounding bias when estimating the causal effect of $W$ on $Y$. The back-door criterion, a well-known graphical criterion, can be applied to discover such an adjustment set $\mathbf{S}\subseteq \mathbf{X}$  in $\mathcal{G}$.
	
\begin{definition}[Back-door criterion~\cite{pearl2009causality}]
		\label{def:backdoorcrite}
		In a DAG $\mathcal{G}=(\mathbf{V}, \mathbf{E})$, for the pair of variables $(W, Y)\in \mathbf{V}$, a set of variables $\mathbf{S}\subseteq \mathbf{V}\setminus\{W, Y\}$ is said to satisfy the back-door criterion in the given DAG $\mathcal{G}$ if (i) $\mathbf{S}$ does not contain a descendant node of $W$; and (ii) $\mathbf{S}$ blocks every back-door path between $W$ and $Y$ (the paths between $W$ and $Y$ starting with an arrow into $W$). A set $\mathbf{S}$ is referred to as a \emph{back-door set} relative to $(W, Y)$ in $\mathcal{G}$ if $\mathbf{S}$ satisfies the back-door criterion relative to $(W, Y)$ in $\mathcal{G}$. Therefore, adjusting for the back-door set $\mathbf{S}$, we have $ATE(W, Y) 	= \mathbb{E}[Y\mid w=1, \mathbf{S}=\mathbf{s}]-  \mathbb{E}[Y\mid w=0, \mathbf{S}=\mathbf{s}]$.
	\end{definition}
	
	For a sub-population with the same features, the conditional ATE (CATE) (Some researchers use it to approximate ITE~\cite{shalit2017estimating}) from observational data is as follows: 
	\begin{equation}
		\label{eq:cate}
		\begin{aligned}
			CATE(W, Y\mid\mathbf{X}=\mathbf{x}) & =  \mathbb{E}[Y\mid do(W=1), \mathbf{X}=\mathbf{x}] \\ & -  \mathbb{E}[Y\mid do(W=0), \mathbf{X}=\mathbf{x}] 
		\end{aligned}
	\end{equation}
	\noindent where $\mathbf{X}$ contains all factors causing the outcome $Y$.
	
	 When there exists a confounding $M$-bias variable, the set of measured variables $\mathbf{X}$ are not enough for the identification of $ATE(W, Y)$ and $CATE(W, Y\mid\mathbf{X}=\mathbf{x})$ from data as discussed in Introduction. We will introduce our DLRCE algorithm for solving this challenging problem in Section~\ref{sec:DLRCE}.
	
	\section{The Proposed DLRCE Algorithm}
	\label{sec:DLRCE}
	\subsection{Problem Setting}
	\label{subsec:prob}
	We assume that the underlying data generation or causal mechanism is represented as a causal DAG $\mathcal{G}=(\mathbf{X}\cup\mathbf{U}\cup\{W, Y\}, \mathbf{E})$  shown in Fig.~\ref{fig:simpledags} (d), where $\mathbf{U} =\mathbf{Z}\cup\mathbf{L}\cup\mathbf{F}$ are latent confounders, $\mathbf{X}= \mathbf{Q}\cup \mathbf{M}$ are measured variables, $\mathbf{Q}$ is the set of proxy variables for $\mathbf{Z}$, and $\mathbf{M}$ is the set of proxy variables for both $\mathbf{L}$ and $\mathbf{F}$. 
	
	Existing methods cannot be used to obtain an unbiased estimation of the causal effect of $W$ on $Y$ using measured variables since either adjusting or not adjusting for $\mathbf{M}$ results in a biased estimation. The aim of this paper is to unbiasedly estimate the ATE of $W$ on $Y$,  and the CATE of $W$ on $Y$ conditioning on $\mathbf{X}$ from observational data. More precisely, the research problem to be tackled in this paper is as follows. 
	
	\begin{problem}
		\label{pro001}
		Given an observational dataset $\mathcal{D}$ of a set measured variables $\{\mathbf{X}=\mathbf{Q}\cup \mathbf{M},  {W, Y}\}$,  and assume that $\mathcal{D}$ is generated from the underlying DAG $\mathcal{G}\!\!=\!\!(\mathbf{X}\cup \mathbf{U}\cup \{W, Y\}, \mathbf{E})$ as shown in Fig.~\ref{fig:simpledags} (d).  $W$ and $Y$ are the treatment and outcome variables respectively and $\mathbf{X}$ contains pretreatment variables. $\mathbf{Q}$ is a set of proxy confounders and $\mathbf{M}$ represents confounding $M$-bias variables. The goal is to estimate $ATE(W, Y)$ and $CATE(W, Y\mid \mathbf{X}=\mathbf{x})$ from the dataset $\mathcal{D}$.
	\end{problem} 

	\subsection{Theoretical base  of the Proposed DLRCE Algorithm}
	\label{subsec:DLRCE}
	We will leverage the capability of VAE (variational autoencoder)~\cite{kingma2014auto,kingma2019introduction} in disentangled representation learning to tackle the confounding $M$-bias problem. VAEs are generative models, also known as latent variable models, and use a prior distribution and a noise distribution for generating the latent representations of the measured variables. We propose to make use of the VAE technique to learn and disentangle the latent representations of the latent variables in our problem setting. Specifically, we propose to use VAE to learn the representation of $\mathbf{Z}$ from the proxy variables $\mathbf{Q}$, and the latent representations $\mathbf{\Psi}$ from $\mathbf{M}$, and then disentangle $\mathbf{\Psi}$ into representations of $\mathbf{L}$ and $\mathbf{F}$, respectively. As in VAE literature, we use the same letter to denote a latent variable and its learned representation. The learned and disentangled latent representations $\{\mathbf{F}, \mathbf{Z}\}$ and $\mathbf{M}$ are used to obtain unbiased estimation of $CATE(W, Y\mid \mathbf{X}=\mathbf{x})$ and $ATE(W, Y)$ from observational data with latent confounders.

	We first demonstrate that the latent representations learned and disentangled by the DLRCE algorithm are sound to estimate $ATE(W, Y)$ from  the dataset $\mathcal{D}$. 	
	
	\begin{theorem}
		\label{theorem:001}
		Given the setting in Problem~\ref{pro001}, $ATE(W, Y)$ can be identified if the latent representations $\mathbf{Z}$ and $\mathbf{L}$ are recovered from the dataset $\mathcal{D}$, and we have $ATE(W, Y) = \mathbb{E}[Y\mid W=1, \mathbf{Z}=\mathbf{z}, \mathbf{L}=\mathbf{l}]-  \mathbb{E}[Y\mid W=0, \mathbf{Z}=\mathbf{z}, \mathbf{L}=\mathbf{l}]$.
	\end{theorem}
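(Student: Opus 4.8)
The plan is to prove the theorem by verifying that the recovered latent set $\{\mathbf{Z}, \mathbf{L}\}$ satisfies the back-door criterion (Definition~\ref{def:backdoorcrite}) relative to $(W, Y)$ in the DAG of Fig.~\ref{fig:simpledags}~(d), and then invoking the back-door adjustment formula to obtain the stated identification. Since the theorem assumes that $\mathbf{Z}$ and $\mathbf{L}$ are faithfully recovered from $\mathcal{D}$, I may treat them as if observed and thereby reduce the identification question to a purely graphical d-separation argument (Definition~\ref{d-separation}).

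First I would make the edge set of Fig.~\ref{fig:simpledags}~(d) explicit from the problem setting: $\mathbf{Z} \rightarrow W$, $\mathbf{Z} \rightarrow Y$, and $\mathbf{Z} \rightarrow \mathbf{Q}$ (so that $\mathbf{Q}$ is a proxy of the confounder $\mathbf{Z}$), together with the confounding $M$-structure $W \leftarrow \mathbf{L} \rightarrow \mathbf{M} \leftarrow \mathbf{F} \rightarrow Y$ and the extra confounding arc $\mathbf{M} \rightarrow Y$, plus the target causal edge $W \rightarrow Y$. Next I would enumerate every back-door path from $W$ to $Y$, i.e.\ every path with an arrow into $W$: (i) $W \leftarrow \mathbf{Z} \rightarrow Y$; (ii) $W \leftarrow \mathbf{L} \rightarrow \mathbf{M} \rightarrow Y$; and (iii) $W \leftarrow \mathbf{L} \rightarrow \mathbf{M} \leftarrow \mathbf{F} \rightarrow Y$. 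Then I would check the two conditions of the back-door criterion: condition (i) holds because both $\mathbf{Z}$ and $\mathbf{L}$ are ancestors of $W$ and hence the set contains no descendant of $W$; condition (ii) holds because path (i) is blocked at the fork $\mathbf{Z}$, while paths (ii) and (iii) are both blocked at the fork $\mathbf{L}$, and $\mathbf{Z}, \mathbf{L} \in \{\mathbf{Z}, \mathbf{L}\}$. Finally, applying the adjustment identity of Definition~\ref{def:backdoorcrite} to the back-door set $\{\mathbf{Z}, \mathbf{L}\}$ yields $ATE(W, Y) = \mathbb{E}[Y\mid W=1, \mathbf{Z}=\mathbf{z}, \mathbf{L}=\mathbf{l}] - \mathbb{E}[Y\mid W=0, \mathbf{Z}=\mathbf{z}, \mathbf{L}=\mathbf{l}]$, as claimed.

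The conceptual crux I would argue carefully is that blocking the back-door paths at the fork $\mathbf{L}$, rather than at the collider $\mathbf{M}$, is exactly what resolves the confounding $M$-bias dilemma: adjusting for $\mathbf{M}$ would close the confounding route $\mathbf{L} \rightarrow \mathbf{M} \rightarrow Y$ but would simultaneously \emph{open} the collider in path (iii), whereas conditioning on the recovered $\mathbf{L}$ closes both (ii) and (iii) at their common source. I would also stress that $\mathbf{F}$ need not enter the adjustment set, since path (iii) is already blocked at $\mathbf{L}$ and the collider $\mathbf{M}$ is left unconditioned, so no $M$-bias is introduced. The only nontrivial dependence on the recovery step is the faithfulness assumption (Definition~\ref{Faithfulness}), which is what lets the graphical d-separations translate into the conditional-independence statements underpinning the adjustment formula; I would flag this as the sole gap to be discharged by the VAE recovery guarantees treated elsewhere in the paper.
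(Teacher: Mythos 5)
Your proposal is correct and follows essentially the same route as the paper's own proof: it enumerates the same three back-door paths, verifies both conditions of the back-door criterion for $\mathbf{Z}\cup\mathbf{L}$, and applies the adjustment formula. Your additional remarks on why blocking at the fork $\mathbf{L}$ rather than the collider $\mathbf{M}$ resolves the confounding $M$-bias, and on the role of the recovery assumption, are sensible elaborations but do not change the argument.
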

	\begin{proof}
		In Fig.~\ref{fig:simpledags} (d), $\mathbf{Z}$ and $\mathbf{L}$ are the parents of $W$. We will prove that $\mathbf{Z}\cup \mathbf{L}$ satisfies the back-door criterion (Definition~\ref{def:backdoorcrite}) \wrt $(W, Y)$, \ie $\mathbf{Z}\cup \mathbf{L}$ blocks all the backdoor paths between $W$ and $Y$. Firstly, $\mathbf{Z}$ and $\mathbf{L}$ do not contain any descendants of $W$, \ie the first condition (i) of the back-door criterion holds. Secondly, there are three back-door paths between $W$ and $Y$, i.e., $W\leftarrow \mathbf{Z}\rightarrow Y$,  $W\leftarrow \mathbf{L}\rightarrow \mathbf{M}\rightarrow Y$, and $W\leftarrow \mathbf{L}\rightarrow \mathbf{M}\leftarrow \mathbf{F}\rightarrow Y$. The first back-door path is blocked by $\mathbf{Z}$ and the remaining two back-door paths are blocked by $\mathbf{L}$. Hence $\mathbf{Z}$ and $\mathbf{L}$ block all back-door paths between $W$ and $Y$, \ie the second condition (ii) of the back-door criterion holds. Therefore,  $\mathbf{Z}\cup \mathbf{L}$ satisfies the back-door criterion and based on Eq.~\ref{eq:ate_do} $ATE(W, Y)$ can be identified in the dataset $\mathcal{D}$. Hence, $ATE(W, Y) = \mathbb{E}[Y\mid W=1, \mathbf{Z}=\mathbf{z}, \mathbf{L}=\mathbf{l}]-  \mathbb{E}[Y\mid W=0, \mathbf{Z}=\mathbf{z}, \mathbf{L}=\mathbf{l}]$.  
	\end{proof}

	Theorem~\ref{theorem:001} presents a theoretical base for $ATE(W, Y)$ estimation. It is worth mentioning that the conditional clause `if  $\mathbf{Z}$ and $\mathbf{L}$ are recovered from the dataset $\mathcal{D}$' in the theorem is a fundamental assumption that is widely made in VAE-based causal inference~\cite{louizos2017causal,hassanpour2019learning,zhang2021treatment}.

	In the following theorem, we will show that the latent representations learned and disentangled by the DLRCE  are  
	sound to estimate $CATE(W, Y\mid \mathbf{X}=\mathbf{x})$ from $\mathcal{D}$.
	\begin{theorem}
		\label{theorem:002}
			Given the setting in Problem~\ref{pro001}, $CATE(W, Y\mid\mathbf{X}=\mathbf{x})$ can be identified if the latent variables $\mathbf{Z}$ and $\mathbf{F}$ are recovered from the dataset $\mathcal{D}$, and we have $CATE(W, Y\mid \mathbf{X}=\mathbf{x}) = \mathbb{E}[Y\mid W=1, \mathbf{Z}=\mathbf{z}, \mathbf{M}=\mathbf{m}, \mathbf{F}=\mathbf{f}]-  \mathbb{E}[Y\mid W=0, \mathbf{Z}=\mathbf{z}, \mathbf{M}=\mathbf{m}, \mathbf{F}=\mathbf{f}]$.
	\end{theorem}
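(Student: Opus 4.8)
The plan is to follow the template of the proof of Theorem~\ref{theorem:001}, but now for the conditional effect and with the adjustment set $\mathbf{S}=\mathbf{Z}\cup\mathbf{M}\cup\mathbf{F}$ replacing $\mathbf{Z}\cup\mathbf{L}$. The reason the set must change is structural: $CATE(W,Y\mid\mathbf{X}=\mathbf{x})$ conditions on $\mathbf{X}=\mathbf{Q}\cup\mathbf{M}$, which \emph{forces} us to condition on $\mathbf{M}$, and $\mathbf{M}$ is exactly the collider whose conditioning activates the $M$-bias path. So the first step is to verify that $\mathbf{S}=\mathbf{Z}\cup\mathbf{M}\cup\mathbf{F}$ satisfies the back-door criterion (Definition~\ref{def:backdoorcrite}) \wrt $(W,Y)$ in Fig.~\ref{fig:simpledags}(d). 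Condition (i) is immediate, since none of $\mathbf{Z}$, $\mathbf{M}$, $\mathbf{F}$ is a descendant of $W$. For condition (ii) I would traverse the same three back-door paths used in Theorem~\ref{theorem:001}: $W\leftarrow\mathbf{Z}\rightarrow Y$ is blocked at the fork $\mathbf{Z}$; $W\leftarrow\mathbf{L}\rightarrow\mathbf{M}\rightarrow Y$ is blocked at the chain node $\mathbf{M}\in\mathbf{S}$; and on $W\leftarrow\mathbf{L}\rightarrow\mathbf{M}\leftarrow\mathbf{F}\rightarrow Y$ the collider $\mathbf{M}$ is now \emph{opened} by conditioning, but the fork $\mathbf{F}\in\mathbf{S}$ re-blocks it. This is the crux that separates this theorem from Theorem~\ref{theorem:001}: because conditioning on $\mathbf{X}$ leaves us no way to avoid the collider, we must include $\mathbf{F}$ rather than $\mathbf{L}$.

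The second step is to reconcile conditioning on the \emph{observed} $\mathbf{X}=\mathbf{Q}\cup\mathbf{M}$ with conditioning on the \emph{recovered} set $\mathbf{Z}\cup\mathbf{M}\cup\mathbf{F}$ that appears in the claimed formula. Here I would exploit the fact that $\mathbf{Q}$ is a pure proxy of $\mathbf{Z}$: its only adjacency in $\mathcal{G}$ is $\mathbf{Z}\rightarrow\mathbf{Q}$, so $\mathbf{Q}\indep\{W,Y\}\mid\mathbf{Z}$ by d-separation (Definition~\ref{d-separation}). Hence, once the recovered $\mathbf{Z}$ is conditioned on, $\mathbf{Q}$ carries no further information about $Y$ and may be dropped from the conditioning set without altering the conditional law of the outcome. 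Conversely, the observed value $\mathbf{x}$ fixes $\mathbf{m}$ directly and, through the recovery maps assumed available in the theorem's hypothesis, the matching $\mathbf{z}$ and $\mathbf{f}$; evaluating the effect at the stratum $(\mathbf{z},\mathbf{m},\mathbf{f})$ is therefore the correct counterpart of conditioning on $\mathbf{X}=\mathbf{x}$ after recovery.

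Finally I would assemble the identification. Since $\mathbf{S}=\mathbf{Z}\cup\mathbf{M}\cup\mathbf{F}$ is a valid back-door set, within each stratum the interventional distribution coincides with the observational conditional, giving $\mathbb{E}[Y\mid do(W=w),\mathbf{Z}=\mathbf{z},\mathbf{M}=\mathbf{m},\mathbf{F}=\mathbf{f}]=\mathbb{E}[Y\mid W=w,\mathbf{Z}=\mathbf{z},\mathbf{M}=\mathbf{m},\mathbf{F}=\mathbf{f}]$ for $w\in\{0,1\}$. Substituting into the definition of $CATE$ in Eq.~(\ref{eq:cate}) and subtracting the two treatment arms yields exactly the stated expression. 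I expect the only genuinely delicate points to be the collider bookkeeping in the first step (recognising that the $\mathbf{M}$ we are obliged to condition on opens the path that $\mathbf{F}$ must close) and the proxy-redundancy argument in the second step; everything else is a direct transcription of the back-door adjustment already carried out for Theorem~\ref{theorem:001}.
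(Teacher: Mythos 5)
Your proof is correct and takes essentially the same route as the paper's: the paper verifies exactly the d-separation facts you identify ($Y \Vbar W \mid \mathbf{Z},\mathbf{M},\mathbf{F}$ in $\mathcal{G}_{\underline{W}}$, and the redundancy of $\mathbf{Q}$ given $\mathbf{Z}$ and of $\mathbf{L}$ given $\{\mathbf{M},\mathbf{F}\}$), merely packaging them as applications of do-calculus Rules 2 and 3 instead of as a back-door-criterion check. Your collider bookkeeping --- conditioning on $\mathbf{X}$ forces $\mathbf{M}$ into the adjustment set, which opens $W\leftarrow\mathbf{L}\rightarrow\mathbf{M}\leftarrow\mathbf{F}\rightarrow Y$, so $\mathbf{F}$ rather than $\mathbf{L}$ must close it --- is precisely the paper's reason for the switch from $\{\mathbf{Z},\mathbf{L}\}$ in Theorem~\ref{theorem:001} to $\{\mathbf{Z},\mathbf{M},\mathbf{F}\}$ here.
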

	\begin{proof}
		We first use the `do' calculus rules~\cite{pearl2009causality} to remove the `do' operator from the definition of $CATE$, \ie $CATE(W, Y\mid \mathbf{X}=\mathbf{x})= \mathbb{E}[Y\mid do(W), \mathbf{Z}=\mathbf{z}, \mathbf{F}=\mathbf{f}, \mathbf{L}=\mathbf{l}, \mathbf{X}=\mathbf{x}] = \mathbb{E}[Y\mid do(W), \mathbf{Z}=\mathbf{z}, \mathbf{F}=\mathbf{f}, \mathbf{L}=\mathbf{l}, \mathbf{M}=\mathbf{m}, \mathbf{Q}=\mathbf{q}]$. Let $\mathcal{G}_{\underline{W}}$ be the manipulated DAG by removing all outgoing edges of $W$ from the causal DAG in Fig.~\ref{fig:simpledags} (d), and $\mathcal{G}_{\overline{W}}$ represents the manipulated DAG by eliminating all edges into $W$. Note that $Y\Vbar \mathbf{L}\mid \mathbf{M},\mathbf{F}$ and $Y\Vbar \mathbf{Q}\mid \mathbf{Z}$ in $\mathcal{G}_{\overline{W}}$. Hence, using Rule 3 of do-calculus, we can remove $\mathbf{L}$ and $\mathbf{Q}$ from the conditioning set, and obtain $CATE(W, Y\mid \mathbf{X}=\mathbf{x}) = \mathbb{E}[Y\mid do(W), \mathbf{Z}=\mathbf{z}, \mathbf{F}=\mathbf{f}, \mathbf{M}=\mathbf{m}]$. Following Rule 2 of do-calculus~\cite{pearl2009causality} with the condition $Y\Vbar W\mid \mathbf{Z, M, F}$ in $\mathcal{G}_{\underline{W}}$, we have $CATE(W, Y\mid \mathbf{X}=\mathbf{x})= \mathbb{E}[Y\mid do(W), \mathbf{F = f, M = m, Z = z}) = \mathbb{E}[Y\mid W, \mathbf{Z=z, M=m, F=f}]$. Therefore, $\mathbf{Z}$, $\mathbf{F}$ and $\mathbf{M}$ are sufficient for identifying $CATE(W, Y\mid\mathbf{X}=\mathbf{x})$ from the dataset $\mathcal{D}$. Hence, $CATE(W, Y\mid \mathbf{X}=\mathbf{x}) = \mathbb{E}[Y\mid W, \mathbf{Z=z, M=m, F=f}]$. 
	\end{proof}
	
	Theorems~\ref{theorem:001} and~\ref{theorem:002} provide the ground that learning the latent representations allows us to unbiasedly estimate $CATE$ and $ATE$ from observational data when there exists a set of proxy variables, \ie $\mathbf{X}=(\mathbf{M, Q})$.  In the next section, we will introduce our proposed DLRCE algorithm for learning these latent representations from observational data.
	
	\subsection{Disentanglement of Latent Representations}
	\label{subsec:model}
	In this section, we present the details of our proposed DLRCE algorithm that is built on disentangled representation learning and supported by Theorems 1 and 2. The overall architecture of DLRCE is presented in Fig.~\ref{fig:archit}. DLRCE aims to learn the latent representations $\mathbf{Z}$ from the proxy variables $\mathbf{Q}$, and the latent representations $\mathbf{\Psi}$ from the proxy variables $\mathbf{M}$, and then disentangle $\mathbf{\Psi}$ into two disjoint sets $\mathbf{L}$ and $\mathbf{F}$. Finally, the latent representations $\{\mathbf{F}, \mathbf{Z}\}$ and $\mathbf{M}$ are used for calculating causal effects of $W$ on $Y$.
	
	\begin{figure*}[t]
		\centering
		\includegraphics[scale=0.45]{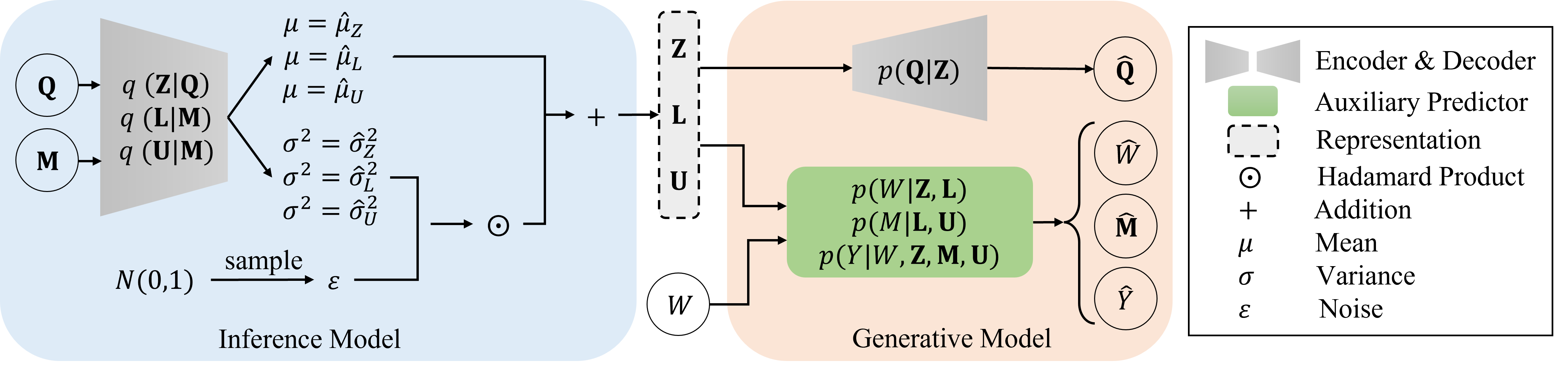}
		\caption{The architecture of the proposed DLRCE algorithm consists of the inference network and the generative network for learning the three representations from proxy variables. Three auxiliary predictors ensure that the treatment $W$ is predicted by $\mathbf{Z}$ and $\mathbf{L}$, the measured variables $\mathbf{M}$ are predicted by $\mathbf{L}$ and $\mathbf{F}$, and the outcome $Y$ is predicted  by $\mathbf{Z}$, $\mathbf{L}$ and $\mathbf{M}$.}
		\label{fig:archit}
	\end{figure*} 
	
	To learn and disentangle the representations, the inference model and the generative model are employed to approximate the two posteriors $p_{\varphi_{\mathbf{Q}}}(\mathbf{Q}\mid\mathbf{Z})$ and  $p_{\varphi_{\mathbf{M}}}(\mathbf{M}\mid\mathbf{L}, \mathbf{F})$, where $\varphi_{\mathbf{Q}}$ and $\varphi_{\mathbf{M}}$ are the network parameters in the generative model. In the inference model of DLRCE, three separate encoders $q_{\theta_\mathbf{Z}}(\mathbf{Z}\mid\mathbf{Q})$, $q_{\theta_\mathbf{L}}(\mathbf{L}\mid\mathbf{M})$ and $q_{\theta_\mathbf{F}}(\mathbf{F}\mid\mathbf{M})$ are employed to serve as variational posteriors for deducing the three latent representations, for which $\theta_\mathbf{Z}$, $\theta_\mathbf{L}$ and $\theta_\mathbf{F}$ are the network parameters. In the generative model of the DLRCE algorithm, the latent representation $\mathbf{Z}$ is generated from a single encoder $q_{\theta_\mathbf{Z}}(\mathbf{Z}\mid\mathbf{Q})$ used by a single decoder $p_{\varphi_{\mathbf{Q}}}(\mathbf{Q}\mid\mathbf{Z})$ to reconstruct $\mathbf{Q}$; the latent representations $\mathbf{L}$ and $\mathbf{F}$ are generated from two separated encoders $q_{\theta_\mathbf{L}}(\mathbf{L}\mid\mathbf{M})$ and $q_{\theta_\mathbf{F}}(\mathbf{F}\mid\mathbf{M})$ used by a single decoder $p_{\varphi_{\mathbf{M}}}(\mathbf{M}\mid\mathbf{L}, \mathbf{F})$ to reconstruct $\mathbf{M}$. 
	
	As in the standard VAE~\cite{kingma2014auto,kingma2019introduction}, we use Gaussian distributions to initialise the prior distributions of $P(\mathbf{Z})$, $P(\mathbf{L})$ and $P(\mathbf{F})$, and defined as:
	\begin{equation}
		\label{eq:1}
		\begin{aligned}
		&	 P(\mathbf{Z}) = \prod_{i=1}^{\left|\mathbf{Z} \right|} \mathcal{N}(Z_{i} \mid 0,1);  
			 P(\mathbf{L}) = \prod_{i=1}^{\left|\mathbf{L} \right|} \mathcal{N}(L_{i} \mid 0,1);
			 \\& P(\mathbf{F}) = \prod_{i=1}^{\left|\mathbf{F} \right|} \mathcal{N}(F_{i} \mid 0,1);
		\end{aligned}	
	\end{equation} 
	
	In the inference model of DLRCE, the variational posteriors for approximating  $q_{\theta_\mathbf{Z}}(\mathbf{Z}\mid\mathbf{Q})$, $q_{\theta_\mathbf{L}}(\mathbf{L}\mid\mathbf{M})$ and $q_{\theta_\mathbf{F}}(\mathbf{F}\mid\mathbf{M})$ are defined as:
	\begin{equation}
		\label{eq:2}
		\begin{aligned}
			&q_{\theta_\mathbf{Z}}(\mathbf{Z}\mid\mathbf{Q}) = \prod_{i=1}^{\left|\mathbf{Z} \right|} \mathcal{N}(\mu = \hat{\mu}_{Z_i}, \sigma^2 = \hat{\sigma}^2_{Z_i}); 
			\\& q_{\theta_\mathbf{L}}(\mathbf{L}\mid\mathbf{M}) = \prod_{i=1}^{\left|\mathbf{L}\right|} \mathcal{N}(\mu = \hat{\mu}_{L_i}, \sigma^2 = \hat{\sigma}^2_{L_i}); 
			\\& q_{\theta_\mathbf{F}}(\mathbf{F}\mid\mathbf{M}) = \prod_{i=1}^{\left|\mathbf{F}\right|} \mathcal{N}(\mu = \hat{\mu}_{F_i}, \sigma^2 = \hat{\sigma}^2_{F_i});
		\end{aligned}	
	\end{equation}
	\noindent where $\hat{\mu}_{Z_i}, \hat{\mu}_{L_i}, \hat{\mu}_{F_i}$ and $\hat{\sigma}^2_{Z_i}, \hat{\sigma}^2_{L_i}, \hat{\sigma}^2_{F_i}$ are the estimated means and variances of latent variables $Z_i$, $L_i$ and $F_i$, respectively. 
	
	In the generative models of DLRCE, $\mathbf{Z}$, $\mathbf{L}$ and $\mathbf{F}$ are used to generate $W$, $\mathbf{Q}$ and $\mathbf{M}$ as:
	\begin{equation}
		\label{eq:3}
		\begin{aligned}
			&p_{\varphi_{W}}(W\mid\mathbf{Z}, \mathbf{L}) = B(\sigma(g_1(\mathbf{Z}, \mathbf{L})));  
			\\& p_{\varphi_{\mathbf{Q}}}(\mathbf{Q}\mid\mathbf{Z}) = \prod_{i=1}^{\left|\mathbf{Q} \right|} P(Q_i \mid\mathbf{Z});
			\\& p_{\varphi_{\mathbf{M}}}(\mathbf{M}\mid\mathbf{L}, \mathbf{F}) = \prod_{i=1}^{\left|\mathbf{M} \right|} P(M_i \mid\mathbf{L}, \mathbf{F});
		\end{aligned}	
	\end{equation} 
	\noindent where $B(\cdot)$ is the Bernoulli function, $g_1(\cdot)$ is a neural network, $\sigma(\cdot)$ is the logistic function. Notably, $P(Q_i \mid\mathbf{Z})$ and $ P(M_i \mid\mathbf{L}, \mathbf{F})$ are the distributions on the $i$-th variable.
	
	In the generative model of DLRCE, we generate the outcome $Y$ based on its data type. 
	
	 When $Y$ is a continuous variable, we sample $Y$ from a normal distribution and model it in the treatment and control groups as $P(Y \mid W = 0, \mathbf{Z}, \mathbf{M}, \mathbf{F})$ and $P(Y \mid W = 1, \mathbf{Z}, \mathbf{M}, \mathbf{F})$ respectively. Hence, the generative model for $Y$ is described as:
	\begin{equation}
		\label{eq:4}
		\begin{aligned}
			& p_{\varphi_{Y}}(Y \mid W, \mathbf{Z}, \mathbf{M}, \mathbf{F}) = \mathcal{N}(\mu = \hat{\mu}_{Y}, \sigma^2 = \hat{\sigma}^2_{Y}); \\
			&\hat{\mu}_{Y} = W \cdot g_2( \mathbf{Z}, \mathbf{M}, \mathbf{F}) + (1-W) \cdot g_3( \mathbf{Z}, \mathbf{M}, \mathbf{F});  \\ 
			&\hat{\sigma}^2_{Y} = W \cdot g_4( \mathbf{Z}, \mathbf{M}, \mathbf{F}) + (1-W) \cdot g_5( \mathbf{Z}, \mathbf{M}, \mathbf{F});
		\end{aligned}	
	\end{equation}
	\noindent where $g_2(\cdot)$, $g_3(\cdot)$, $g_4(\cdot)$ and $g_5(\cdot)$ are the functions parameterised by neural networks. 
	
	When $Y$ is a binary variable, we use a Bernoulli distribution to parameterise it:
	\begin{equation}
		\label{eq:5}
		\begin{aligned}
			&p_{\varphi_{Y}}(Y\mid W, \mathbf{Z}, \mathbf{M}, \mathbf{F}) = B(\sigma(g_6(W, \mathbf{Z}, \mathbf{M}, \mathbf{F}))),
		\end{aligned}	
	\end{equation}
	\noindent where $g_6(\cdot)$ is a neural network. We optimise these parameters by maximising the evidence lower bound (ELBO) ~\cite{kingma2014auto}:
	\begin{equation}
		\label{eq:6}
		\begin{aligned}
			\mathcal{L}_{ELBO} = \: &\mathbb{E}_{q_{\theta_{\mathbf{Z}}}}[\log p_{\varphi_{\mathbf{Q}}}(\mathbf{Q}\mid\mathbf{Z})]+ \\&
			\mathbb{E}_{q_{\theta_{\mathbf{L}}}q_{\theta_{\mathbf{F}}}}[\log p_{\varphi_{\mathbf{M}}}(\mathbf{M}\mid\mathbf{L}, \mathbf{F})] \\& -  D_{KL}[q_{\theta_{\mathbf{Z}}}(\mathbf{Z}\mid \mathbf{Q})||P(\mathbf{Z})]  \\ 
			& - D_{KL}[q_{\theta_{\mathbf{L}}}(\mathbf{L}\mid\mathbf{M})||P(\mathbf{L})] \\&
			- D_{KL}[q_{\theta_{\mathbf{F}}}(\mathbf{F}\mid\mathbf{M})||P(\mathbf{F})],
		\end{aligned}	
	\end{equation}
	\noindent where $D_{KL}[\cdot||\cdot]$ is a KL divergence term. 
	
	To encourage the disentanglement of latent representations and ensure that $\mathbf{M}$ can be recovered by $\mathbf{L}$ and $\mathbf{F}$, and to ensure $W$ can be predicted by $\mathbf{Z}$ and $\mathbf{L}$, and $Y$ can be predicted by $\mathbf{Z}$, $\mathbf{M}$ and $\mathbf{F}$, three auxiliary predictors are added to the variational ELBO. Finally, the objective of DLRCE can be described as:
	\begin{equation}
		\label{eq:7}
		\begin{aligned}
			\mathcal{L}_{DLRCE} =\: & -\mathcal{L}_{ELBO} +  
			\alpha \mathbb{E}_{q_{\theta_{\mathbf{L}}}q_{\theta_{\mathbf{F}}}}[\log q(\mathbf{M}\mid\mathbf{L}, \mathbf{F})]\\&~ + \beta \mathbb{E}_{q_{\theta_{\mathbf{Z}}}q_{\theta_{\mathbf{L}}}}[\log q(W\mid\mathbf{Z}, \mathbf{L})] \\& ~ +
			\gamma \mathbb{E}_{q_{\theta_{\mathbf{Z}}}q_{\theta_{\mathbf{F}}}}[\log 
			q(Y\mid W, \mathbf{Z}, \mathbf{M}, \mathbf{F})],
		\end{aligned}	
	\end{equation}
	\noindent where $\alpha$, $\beta$ and $\gamma$ are the weights for balancing the three auxiliary predictors. 
	
	To estimate the CATEs of individuals conditioning on their measured variables $\mathbf{X}$, we employ the three encoders $q_{\theta_\mathbf{Z}}(\mathbf{Z}\mid\mathbf{Q})$, $q_{\theta_\mathbf{L}}(\mathbf{L}\mid\mathbf{M})$ and $q_{\theta_\mathbf{F}}(\mathbf{F}\mid\mathbf{M})$ to sample the approximated posteriors, and average the predicted potential outcomes using the classifier $q(Y\mid W, \mathbf{Z}, \mathbf{M}, \mathbf{F})$. Finally, by utilising Theorems 1 and 2, DLRCE is able to estimate the $ATE(W, Y)$ and $CATE(W, Y\mid \mathbf{X}=\mathbf{x})$ from the dataset $\mathcal{D}$.

	\section{Experiments}
	In this section, we conduct experiments on both synthetic and real-world datasets to evaluate the performance of DLRCE for estimating $ATE$ and $CATE$ from observational data with latent confounders. For the synthetic datasets, we use the causal DAG in Fig.~\ref{fig:simpledags} (d) to generate synthetic datasets with ground truths of $ATE$ and $CATE$ for evaluating the performance of DLRCE. For the experiments on real-world datasets, we choose three benchmark datasets, Schoolingreturns~\cite{card1993using}, Cattaneo2~\cite{cattaneo2010efficient} and Sachs~\cite{sachs2005causal} where the empirical causal effects are available in the literature.

	\begin{table*}[t]
		\centering
		\setlength\tabcolsep{12pt}
		\caption{Estimation bias (mean$\pm$standard deviation) over 30  independently repeated experiments on the synthetic datasets with $Y_{linear}$. The best result is marked in boldface. Our proposed DLRCE algorithm obtains the smallest bias.}
		\begin{tabular}{|c|ccccc|}
			\hline
			\multirow{2}{*}{Method}   & \multicolumn{5}{c|}{Sample sizes}        \\ \cline{2-6} 
			& \multicolumn{1}{c|}{2k} & \multicolumn{1}{c|}{4k}   & \multicolumn{1}{c|}{6k}   & \multicolumn{1}{c|}{8k}         & 10k              \\ \hline
			LDML                             & \multicolumn{1}{c|}{30.82$\pm$0.31}  & \multicolumn{1}{c|}{{28.56$\pm$0.19}} & \multicolumn{1}{c|}{28.36$\pm$0.17} & \multicolumn{1}{c|}{28.65$\pm$0.12} & 28.08$\pm$0.05                      \\ \hline
			SLDML                       & \multicolumn{1}{c|}{39.98$\pm$0.38}  & \multicolumn{1}{c|}{28.57$\pm$0.18} & \multicolumn{1}{c|}{28.50$\pm$0.16} & \multicolumn{1}{c|}{28.54$\pm$0.11} & 28.10$\pm$0.05                      \\ \hline
			KernelDML                             & \multicolumn{1}{c|}{39.71$\pm$0.46}  & \multicolumn{1}{c|}{41.06$\pm$0.19} & \multicolumn{1}{c|}{41.00$\pm$0.22} & \multicolumn{1}{c|}{41.98$\pm$0.14} & 43.09$\pm$0.09                      \\ \hline
		X-learner                       & \multicolumn{1}{c|}{23.99$\pm$0.51}  & \multicolumn{1}{c|}{22.24$\pm$0.17}   & \multicolumn{1}{c|}{22.15$\pm$0.14} & \multicolumn{1}{c|}{21.86$\pm$0.11} & 21.92$\pm$0.10 \\ \hline
		R-learner                       & \multicolumn{1}{c|}{28.37$\pm$1.04}  & \multicolumn{1}{c|}{39.73$\pm$0.30}   & \multicolumn{1}{c|}{29.04$\pm$0.21} & \multicolumn{1}{c|}{29.95$\pm$0.13} & 28.86$\pm$0.05 \\ \hline
LDRlearner                       & \multicolumn{1}{c|}{48.57$\pm$0.51}  & \multicolumn{1}{c|}{47.48$\pm$0.17}   & \multicolumn{1}{c|}{46.50$\pm$0.18} & \multicolumn{1}{c|}{46.73$\pm$0.12} & 47.22$\pm$0.09 \\ \hline
			CFDML                       & \multicolumn{1}{c|}{39.53$\pm$0.49}  & \multicolumn{1}{c|}{35.51$\pm$0.15} & \multicolumn{1}{c|}{33.16$\pm$0.12} & \multicolumn{1}{c|}{32.50$\pm$0.08} & 31.81$\pm$0.05                      \\ \hline
			CEVAE                                 & \multicolumn{1}{c|}{31.03$\pm$0.79}  & \multicolumn{1}{c|}{45.73$\pm$0.39} & \multicolumn{1}{c|}{35.60$\pm$0.56} & \multicolumn{1}{c|}{29.47$\pm$1.10} & 23.27$\pm$0.68                      \\ \hline			
			TEDVAE                                & \multicolumn{1}{c|}{40.59$\pm$0.40}  & \multicolumn{1}{c|}{34.59$\pm$0.18} & \multicolumn{1}{c|}{31.82$\pm$0.14} & \multicolumn{1}{c|}{31.27$\pm$0.11} & 29.75$\pm$0.07                      \\ \hline			
			DLRCE                                 & \multicolumn{1}{c|}{\textbf{12.62$\pm$1.20}} & \multicolumn{1}{c|}{\textbf{14.09$\pm$1.06}} & \multicolumn{1}{c|}{\textbf{15.20$\pm$1.23}} & \multicolumn{1}{c|}{\textbf{12.54$\pm$0.74}} & \textbf{13.59$\pm$0.54}                      \\ \hline			
		\end{tabular}
		\label{tab:bias001}
	\end{table*}
	
\begin{table*}[t]
		\centering
		\setlength\tabcolsep{12pt}
		\caption{Estimation PEHE (mean$\pm$standard deviation) over 30 independently repeated experiments on the synthetic datasets with $Y_{linear}$. The best result is marked in boldface. Our proposed DLRCE algorithm obtains the smallest PEHE.}
		\begin{tabular}{|c|ccccc|}
			\hline
			\multirow{2}{*}{Method}   & \multicolumn{5}{c|}{Sample sizes}         \\ \cline{2-6} 
			& \multicolumn{1}{c|}{2k} & \multicolumn{1}{c|}{4k}   & \multicolumn{1}{c|}{6k}   & \multicolumn{1}{c|}{8k}         & 10k              \\ \hline
			LDML        & \multicolumn{1}{c|}{1.06$\pm$0.03}  & \multicolumn{1}{c|}{0.93$\pm$0.02} & \multicolumn{1}{c|}{0.90$\pm$0.01} & \multicolumn{1}{c|}{0.91$\pm$0.01} & 0.88$\pm$0.01                      \\ \hline
			SLDML  & \multicolumn{1}{c|}{1.07$\pm$0.03}  & \multicolumn{1}{c|}{0.93$\pm$0.02} & \multicolumn{1}{c|}{0.91$\pm$0.01} & \multicolumn{1}{c|}{0.90$\pm$0.01} & 0.89$\pm$0.01                      \\ \hline
			KernelDML        & \multicolumn{1}{c|}{1.25$\pm$0.05}  & \multicolumn{1}{c|}{1.22$\pm$0.02} & \multicolumn{1}{c|}{1.24$\pm$0.02} & \multicolumn{1}{c|}{1.27$\pm$0.01} & 1.48$\pm$0.01                      \\ \hline
X-learner  & \multicolumn{1}{c|}{3.61$\pm$0.01}  & \multicolumn{1}{c|}{3.56$\pm$0.01}   & \multicolumn{1}{c|}{3.53$\pm$0.01} & \multicolumn{1}{c|}{3.52$\pm$0.01} & 3.52$\pm$0.01 \\ \hline
R-learner  & \multicolumn{1}{c|}{7.69$\pm$2.21}  & \multicolumn{1}{c|}{5.00$\pm$0.49}   & \multicolumn{1}{c|}{4.13$\pm$0.37} & \multicolumn{1}{c|}{3.91$\pm$0.25} & 3.39$\pm$0.14 \\ \hline
			LDRlearner  & \multicolumn{1}{c|}{1.61$\pm$0.04}  & \multicolumn{1}{c|}{1.51$\pm$0.01}   & \multicolumn{1}{c|}{1.46$\pm$0.01} & \multicolumn{1}{c|}{1.46$\pm$0.01} & 1.48$\pm$0.01 \\ \hline
			CFDML  & \multicolumn{1}{c|}{1.34$\pm$0.03}  & \multicolumn{1}{c|}{1.19$\pm$0.01} & \multicolumn{1}{c|}{1.15$\pm$0.01} & \multicolumn{1}{c|}{1.12$\pm$0.01} & 1.10$\pm$0.01                      \\ \hline
			CEVAE            & \multicolumn{1}{c|}{1.17$\pm$0.06}  & \multicolumn{1}{c|}{1.36$\pm$0.05} & \multicolumn{1}{c|}{1.18$\pm$0.06} & \multicolumn{1}{c|}{0.94$\pm$0.06} & 0.82$\pm$0.07                      \\ \hline
			TEDVAE           & \multicolumn{1}{c|}{1.39$\pm$0.03}  & \multicolumn{1}{c|}{1.09$\pm$0.01} & \multicolumn{1}{c|}{0.99$\pm$0.01} & \multicolumn{1}{c|}{0.95$\pm$0.01} & 0.94$\pm$0.01                      \\ \hline
			DLRCE            & \multicolumn{1}{c|}{\textbf{0.46$\pm$0.12}} & \multicolumn{1}{c|}{\textbf{0.50$\pm$0.12}} & \multicolumn{1}{c|}{\textbf{0.58$\pm$0.13}} & \multicolumn{1}{c|}{\textbf{0.47$\pm$0.07}} & \textbf{0.56$\pm$0.10}                     \\ \hline		
		\end{tabular}
		\label{tab:pehe001}
	\end{table*}	
	
\subsection{Experiment Setup}
\noindent	\textbf{Baseline causal effect estimators}. We compare our proposed DLRCE algorithm with nine state-of-the-art causal effect estimators that are widely used to estimate ATE and CATE from observational data. The seven estimators can be divided into two groups, Machine Learning based estimators and VAE based estimators. The Machine learning based estimators include (1) LinearDML (LDML)~\cite{chernozhukov2018double}: It is to solve the reverse causal metric bias by applying a cross-fitting strategy; (2) SparseLinearDML (SLDML)~\cite{chernozhukov2017orthogonal}: The loss function of the LinearDML estimator is modified by incorporating $L_1$ regularisation; (3) KernelDML~\cite{nie2021quasi}: It combines dimensionality reduction techniques and kernel methods; (4) Mete-learners (including X-learner and R-learner)~\cite{kunzel2019metalearners}; (5) LinearDRLearner (LDRlearner)~\cite{foster2019orthogonal}: It is based on double neural networks for addressing the bias in causal effect estimation; (6) CausalForestDML (CFDML)~\cite{athey2019generalized}: It employs two random forests for causal estimations for predicting two potential outcomes respectively. The VAE based estimators include: (1) causal effect variational autoencoder (CEVAE)~\cite{louizos2017causal} and (2) treatment effect by disentangled variational autoencoder (TEDVAE)~\cite{zhang2021treatment}.

\noindent \textbf{Evaluation metrics}. We employ the estimation bias $\left|(\hat{ATE}- ATE)/ATE \right|*100$ (\%) to evaluate the performance of all estimators, where ATE is the true causal effect and $\hat{ATE}$ is the estimated causal effect. We utilise the Precision of the Estimation of Heterogeneous Effect (PEHE) for the quality of CATE estimation~\cite{hill2011bayesian,louizos2017causal} defined as $\sqrt{\varepsilon_{PEHE}} = \sqrt{\mathbb{E}(((y_1 - y_0)-(\hat{y}_1 - \hat{y}_0))^2)}$, where $y_1, y_0$ represent the true potential outcomes and $\hat{y}_1, \hat{y}_0$ represent the predicted potential outcomes. Note that PEHE is widely employed for assessing CATE estimations  in causal inference\cite{guo2020survey}. To mitigate random noise, we repeat the experiments multiple times and report the average and the standard deviation. For the three real-world datasets, since there is no ground truth causal effects available, we evaluate all estimators against the reference causal effects found in the literature.

\noindent \textbf{Implementation details}.  We use \textit{Python} and the libraries including \textit{pytorch}~\cite{paszke2019pytorch}, \textit{pyro}~\cite{bingham2019pyro} and \textit{econml} to implement our proposed DLRCE algorithm. The implementation of DLRCE is available at the anonymous site~\url{https://anonymous.4open.science/r/DLRCE-385A}. The implementations of LDML, SLDML, KernelDML, LDRLearner and CFDML are from the \textit{Python} package \textit{encoml}~\cite{battocchi2019econml}. The implementations of X-learner and R-learner are from the \textit{Python} package \textit{CausalML}~\cite{chen2020causalml}. The implementation of CEVAE is based on the \textit{Python} library \textit{pyro}~\cite{bingham2019pyro} and the implementations of TEDVAE is from the authors' GitHub\footnote{\url{https://github.com/WeijiaZhang/TEDVAE}}.

\begin{table*}[t]
	\centering
	\setlength\tabcolsep{12pt}
	\caption{Estimation bias (mean$\pm$standard deviation) over 30 independently repeated experiments on the synthetic datasets with $Y_{nonlin}$. The best result is marked in boldface. Our proposed DLRCE algorithm obtains the smallest bias.}
	\begin{tabular}{|c|ccccc|}
		\hline
		\multirow{2}{*}{Method} & \multicolumn{5}{c|}{Sample sizes}  \\ \cline{2-6} 
		& \multicolumn{1}{c|}{2k}          & \multicolumn{1}{c|}{4k}           & \multicolumn{1}{c|}{6k}         & \multicolumn{1}{c|}{8k}         & 10k        \\ \hline		
		LDML               & \multicolumn{1}{c|}{45.39$\pm$0.71}  & \multicolumn{1}{c|}{{47.02$\pm$0.49}}   & \multicolumn{1}{c|}{46.90$\pm$0.30} & \multicolumn{1}{c|}{45.58$\pm$0.32} & 43.97$\pm$0.16 \\ \hline
		SLDML         & \multicolumn{1}{c|}{46.79$\pm$0.76}  & \multicolumn{1}{c|}{47.14$\pm$0.42}   & \multicolumn{1}{c|}{47.12$\pm$0.30} & \multicolumn{1}{c|}{45.79$\pm$0.33} & 44.03$\pm$0.16 \\ \hline
		KernelDML               & \multicolumn{1}{c|}{54.30$\pm$0.94}  & \multicolumn{1}{c|}{61.93$\pm$0.74}   & \multicolumn{1}{c|}{63.45$\pm$0.50} & \multicolumn{1}{c|}{61.72$\pm$0.39} & 62.34$\pm$0.26 \\ \hline
X-learner               & \multicolumn{1}{c|}{33.08$\pm$0.69}  & \multicolumn{1}{c|}{30.12$\pm$0.71}   & \multicolumn{1}{c|}{34.62$\pm$0.20} & \multicolumn{1}{c|}{33.80$\pm$0.27} & 30.60$\pm$0.23 \\ \hline
R-learner  & \multicolumn{1}{c|}{26.13$\pm$0.43}  & \multicolumn{1}{c|}{23.73$\pm$0.32}   & \multicolumn{1}{c|}{25.64$\pm$0.21} & \multicolumn{1}{c|}{25.60$\pm$0.12} & 23.44$\pm$0.13 \\ \hline
		LDRlearner         & \multicolumn{1}{c|}{69.19$\pm$0.98}  & \multicolumn{1}{c|}{72.61$\pm$0.80}   & \multicolumn{1}{c|}{71.94$\pm$0.48} & \multicolumn{1}{c|}{70.28$\pm$0.33} & 69.55$\pm$0.21 \\ \hline
		CFDML         & \multicolumn{1}{c|}{59.94$\pm$0.84}  & \multicolumn{1}{c|}{57.61$\pm$0.41}   & \multicolumn{1}{c|}{53.87$\pm$0.37} & \multicolumn{1}{c|}{51.52$\pm$0.20} & 48.86$\pm$0.16 \\ \hline
		CEVAE                   & \multicolumn{1}{c|}{24.15$\pm$3.08}  & \multicolumn{1}{c|}{61.91$\pm$2.30}   & \multicolumn{1}{c|}{46.21$\pm$3.71} & \multicolumn{1}{c|}{47.07$\pm$4.44} & 41.37$\pm$5.48 \\ \hline
		TEDVAE                  & \multicolumn{1}{c|}{59.96$\pm$1.21}  & \multicolumn{1}{c|}{59.15$\pm$0.63}   & \multicolumn{1}{c|}{54.94$\pm$0.40} & \multicolumn{1}{c|}{52.26$\pm$0.27} & 48.39$\pm$0.17 \\ \hline
		DLRCE                   & \multicolumn{1}{c|}{\textbf{15.52$\pm$0.89}} & \multicolumn{1}{c|}{\textbf{16.58$\pm$7.70}} & \multicolumn{1}{c|}{\textbf{19.32$\pm$3.37}} & \multicolumn{1}{c|}{\textbf{10.57$\pm$0.59}} & \textbf{10.32$\pm$0.65} \\ \hline	
	\end{tabular}
	\label{tab:bias002}
\end{table*}

\begin{table*}[t]
	\centering
	\setlength\tabcolsep{12pt}
	\caption{Estimated PEHE (mean$\pm$standard deviation) over 30 independently repeated experiments on the synthetic datasets with $Y_{nonlin}$ for different methods. The best result is marked in boldface. Our proposed DLRCE algorithm obtains the smallest PEHE.}
	\begin{tabular}{|c|ccccc|}
		\hline
		\multirow{2}{*}{Method}   & \multicolumn{5}{c|}{Samples}                                                                               \\ \cline{2-6} 
		& \multicolumn{1}{c|}{2k} & \multicolumn{1}{c|}{4k}   & \multicolumn{1}{c|}{6k}   & \multicolumn{1}{c|}{8k}         & 10k              \\ \hline
		LDML        & \multicolumn{1}{c|}{1.65$\pm$0.07}  & \multicolumn{1}{c|}{{1.52$\pm$0.04}} & \multicolumn{1}{c|}{1.53$\pm$0.02} & \multicolumn{1}{c|}{1.46$\pm$0.02} & 1.39$\pm$0.01                      \\ \hline
		SLDM  & \multicolumn{1}{c|}{1.62$\pm$0.07}  & \multicolumn{1}{c|}{1.53$\pm$0.05} & \multicolumn{1}{c|}{1.54$\pm$0.02} & \multicolumn{1}{c|}{1.45$\pm$0.02} & 1.39$\pm$0.01                      \\ \hline
		KernelDML        & \multicolumn{1}{c|}{1.69$\pm$0.12}  & \multicolumn{1}{c|}{1.78$\pm$0.10} & \multicolumn{1}{c|}{1.92$\pm$0.05} & \multicolumn{1}{c|}{1.87$\pm$0.03} & 1.88$\pm$0.01                      \\ \hline
		X-learner     & \multicolumn{1}{c|}{6.23$\pm$0.04}  & \multicolumn{1}{c|}{6.22$\pm$0.03}   & \multicolumn{1}{c|}{6.17$\pm$0.01} & \multicolumn{1}{c|}{6.18$\pm$0.01} & 6.10$\pm$0.01 \\ \hline
	R-learner       & \multicolumn{1}{c|}{6.91$\pm$0.04}  & \multicolumn{1}{c|}{4.81$\pm$0.21}   & \multicolumn{1}{c|}{3.76$\pm$0.01} & \multicolumn{1}{c|}{3.25$\pm$0.01} & 2.83$\pm$0.01 \\ \hline
		LDRlearner  & \multicolumn{1}{c|}{2.37$\pm$0.09}  & \multicolumn{1}{c|}{2.27$\pm$0.09}   & \multicolumn{1}{c|}{2.31$\pm$0.03} & \multicolumn{1}{c|}{2.24$\pm$0.03} & 2.20$\pm$0.02 \\ \hline
		CFDML  & \multicolumn{1}{c|}{2.04$\pm$0.06}  & \multicolumn{1}{c|}{1.92$\pm$0.03} & \multicolumn{1}{c|}{1.89$\pm$0.02} & \multicolumn{1}{c|}{1.84$\pm$0.01} & 1.76$\pm$0.01                      \\ \hline
		CEVAE            & \multicolumn{1}{c|}{1.46$\pm$0.21}  & \multicolumn{1}{c|}{1.90$\pm$0.18} & \multicolumn{1}{c|}{1.51$\pm$0.29} & \multicolumn{1}{c|}{1.35$\pm$0.34} & 1.56$\pm$0.42                      \\ \hline
		TEDVAE           & \multicolumn{1}{c|}{2.07$\pm$0.09}  & \multicolumn{1}{c|}{1.82$\pm$0.09} & \multicolumn{1}{c|}{1.75$\pm$0.03} & \multicolumn{1}{c|}{1.63$\pm$0.02} & 1.54$\pm$0.02                      \\ \hline
		DLRCE            & \multicolumn{1}{c|}{\textbf{0.70$\pm$0.09}} & \multicolumn{1}{c|}{\textbf{0.76$\pm$0.64}} & \multicolumn{1}{c|}{\textbf{0.95$\pm$0.35}} & \multicolumn{1}{c|}{\textbf{0.55$\pm$0.08}} & \textbf{0.54$\pm$0.04}                      \\ \hline
	\end{tabular}
	\label{tab:pehe002}
\end{table*}

\subsection{Evaluations on Synthetic Datasets}
\label{subsec:synthe}
We use the causal DAG in Fig.~\ref{fig:simpledags} (d) to generate the synthetic datasets with sample sizes, 2k, 4k, 6k, 8k, and 10k for our experiments. In the causal DAG $\mathcal{G}$, $\mathbf{M}$ and $\mathbf{X}$ are two set of proxy variables. $L$, $F$ and $\mathbf{Z}$ are latent confounders. Similar to~\cite{hassanpour2019counterfactual,cheng2022toward}, $L$, $F$ and $\mathbf{Z}$ are generated from Bernoulli distribution. For an element $M\in \mathbf{M}$, it is generated from the two latent confounders $L$ and $F$ by using $M = \eta_1*L + \eta_2*F$, where $\eta_1$ and $\eta_2$ are two coefficients.  For an element $X\in \mathbf{X}$, it is generated from the latent confounder $Z$ by using $X\sim N(Z, \eta_3*Z)$, where $\eta_3$ is a coefficient. For generating the treatment $W$, we use Bernoulli distribution with the conditional probability $P(W=1\mid L, Z, \mathbf{M}) = [1+exp \{1+0.25*L+0.25*Z\}]$. 

In this work, we generate two types of potential outcomes $Y(W)$, namely a linear function $Y_{linear}$ and a nonlinear function $Y_{nonlinr}$ as $Y(W) = 2 + 3*W + 3*\mathbf{M} + 2*F*\mathbf{M}+ 3*Z  +\epsilon_{w}$, where $\epsilon_w$ is an error term, and $Y(W) = 2 + 3*W + L*\mathbf{M} + \mathbf{M} + 2*F + 3*Z  +\epsilon_{w}$, respectively. Based on the data generation process,  all synthetic datasets have both potential outcomes, \ie the true ITE for an individual is known. In our simulation study,  the true ATE is $3$. To evaluate the performance of our DLRCE algorithm, we conduct the experiments 30 times independently for each setting.	

We report the estimation bias and PEHE for the synthetic datasets generated from $Y_{linear}$ in Tables~\ref{tab:bias001} and~\ref{tab:pehe001}, and for the synthetic datasets generated from $Y_{nonlin}$ in Tables~\ref{tab:bias002} and~\ref{tab:pehe002}.

\noindent \textbf{Results.}  From the experimental results, we have the following observations: (1) Machine learning based estimators, LDML, SLDML, KernelDML, X-learner, R-learner, LDRlearner and CFDML have a large estimation bias and PEHE on both types of synthetic datasets since these estimators rely on the assumption of unconfoundedness and cannot learn a valid representation from proxy variables to block all back-door paths between $W$ and $Y$. (2) VAE based estimators, TEDVAE and CEVAE methods have a large estimation bias and PEHE on both types of synthetic datasets since both methods fail to deal with the confounding $M$-bias variable studied in this work. (3) The proposed DLRCE algorithm obtains the smallest estimation bias and PEHE  among all methods  on both types of synthetic datasets since our DLRCE algorithm learns and disentangles three latent representations $\mathbf{Z}$, $\mathbf{L}$ and $\mathbf{F}$ from proxy variables $(\mathbf{X, M})$ to effectively block all back-door paths between $W$ and $Y$. The smallest estimation bias and PEHE further confirm the correctness of our DLRCE algorithm in learning three latent representations $\mathbf{Z}$, $\mathbf{L}$ and $\mathbf{F}$ from proxy variables. (4) The compared algorithms, machine learning based and VAE based estimators achieve better performance compared to the synthetic datasets generated from $Y_{linear}$ and relatively poorer performance on the synthetic datasets generated from $Y_{nonlin}$. Our proposed DLRCE algorithm consistently produces good performance across both types of datasets.  
	 
	In sum, the simulation studies demonstrate that the proposed DLRCE algorithm  effectively addresses the problem of confounding M-bias when estimating ATE and CATE from observational data in the presence of latent confounders. It further provides evidence that DLRCE is capable of recovering latent variable representations from proxy variables.

\subsection{Parameters Analysis}
	In our DLRCE algorithm, there are three tuning parameters, namely $\alpha$, $\beta$, and $\gamma$, used to balance $\mathcal{L}_{ELBO}$ and the three classifiers during the training process. We consider setting $\{\alpha, \beta, \gamma\} = \{0.1, 0.5, 1, 1.5, 2\}$ to analyse the sensitivity of the three parameters on synthetic datasets with a sample size of 10k, generated using the same data generation process described in Section~\ref{subsec:synthe}. We report the estimation bias of DLRCE algorithm in Table~\ref{tab:parasets}. From Table~\ref{tab:parasets}, we observe that the three parameters ${\alpha, \beta, \gamma}$ have a low sensitivity to the estimation bias of the DLRCE algorithm in ATE estimation. In summary, it is recommended to set the three tuning parameters, $\alpha$, $\beta$, and $\gamma$, to small values for our DLRCE algorithm.

	\begin{table}[t]
		\centering
		\setlength\tabcolsep{10pt}
		\caption{The estimation bias with the different setting of tunning parameters $\alpha$, $\beta$ and $\gamma$.}
		\begin{tabular}{|c|cc|}
			\hline
			\multirow{2}{*}{Weight} & \multicolumn{2}{c|}{Dataset}                     \\ \cline{2-3} 
			& \multicolumn{1}{c|}{Linear}       & Nonlinear    \\ \hline
			$\{\alpha,\beta,\gamma\} = 0.1$ & \multicolumn{1}{c|}{14.23$\pm$0.54}   & 9.31$\pm$0.57   \\ \hline
			$\{\alpha,\beta,\gamma\} = 0.5$ & \multicolumn{1}{c|}{14.46$\pm$1.26}   & 12.92$\pm$1.33   \\ \hline
			$\{\alpha,\beta,\gamma\} = 1$ & \multicolumn{1}{c|}{13.59$\pm$0.54}   & 10.32$\pm$0.65   \\ \hline
			$\{\alpha,\beta,\gamma\} = 1.5$ & \multicolumn{1}{c|}{11.60$\pm$0.88}   & 18.18$\pm$2.12   \\ \hline
			$\{\alpha,\beta,\gamma\} = 2$ & \multicolumn{1}{c|}{15.15$\pm$0.59}   & 11.03$\pm$1.00   \\ \hline
		\end{tabular}
		\label{tab:parasets}
	\end{table}

\subsection{A Study on the Dimensionality of Latent Representations}
In our simulation studies, we set the dimensions of $\mathbf{L}$, $\mathbf{F}$, and $\mathbf{Z}$ to 1, respectively. We conducted a study on the dimensionality of latent representations to demonstrate the effectiveness of this setting. To achieve this goal, we fixed the sample size to 10k for all synthetic datasets and repeated the experiments 30 times independently to minimise random noise for each setting. Following the data generation process described in Section~\ref{subsec:synthe}, we generated a set of synthetic datasets with dimensions of the three latent variables $(\mathbf{L}$, $\mathbf{F}$,  $\mathbf{Z})$ set to $\{1, 3, 5, 7, 9\}$ respectively. In our DLRCE algorithm, we set three parameters $(|\mathbf{L}|$, $|\mathbf{F}|$, $|\mathbf{Z}|)$ to $\{1, 3, 5, 7, 9\}$ respectively to conduct experiments on these synthetic datasets. The estimation bias of the DLRCE algorithm on these datasets is displayed in Fig.~\ref{fig:dims}. From Fig.~\ref{fig:dims}, we observe that the estimation bias of the DLRCE algorithm is the smallest on both types of synthetic datasets when $(|\mathbf{L}|$, $|\mathbf{F}|$, $|\mathbf{Z}|)$  is set to (1, 1, 1) regardless of the true dimensions of $\mathbf{L}$, $\mathbf{F}$, and $\mathbf{Z}$ in the data. Hence, this finding suggests that setting $|\mathbf{L}|$, $|\mathbf{F}|$, and $|\mathbf{Z}|$ to 1 is reasonable.

\begin{figure}[t]
		\centering
		\includegraphics[scale=0.265]{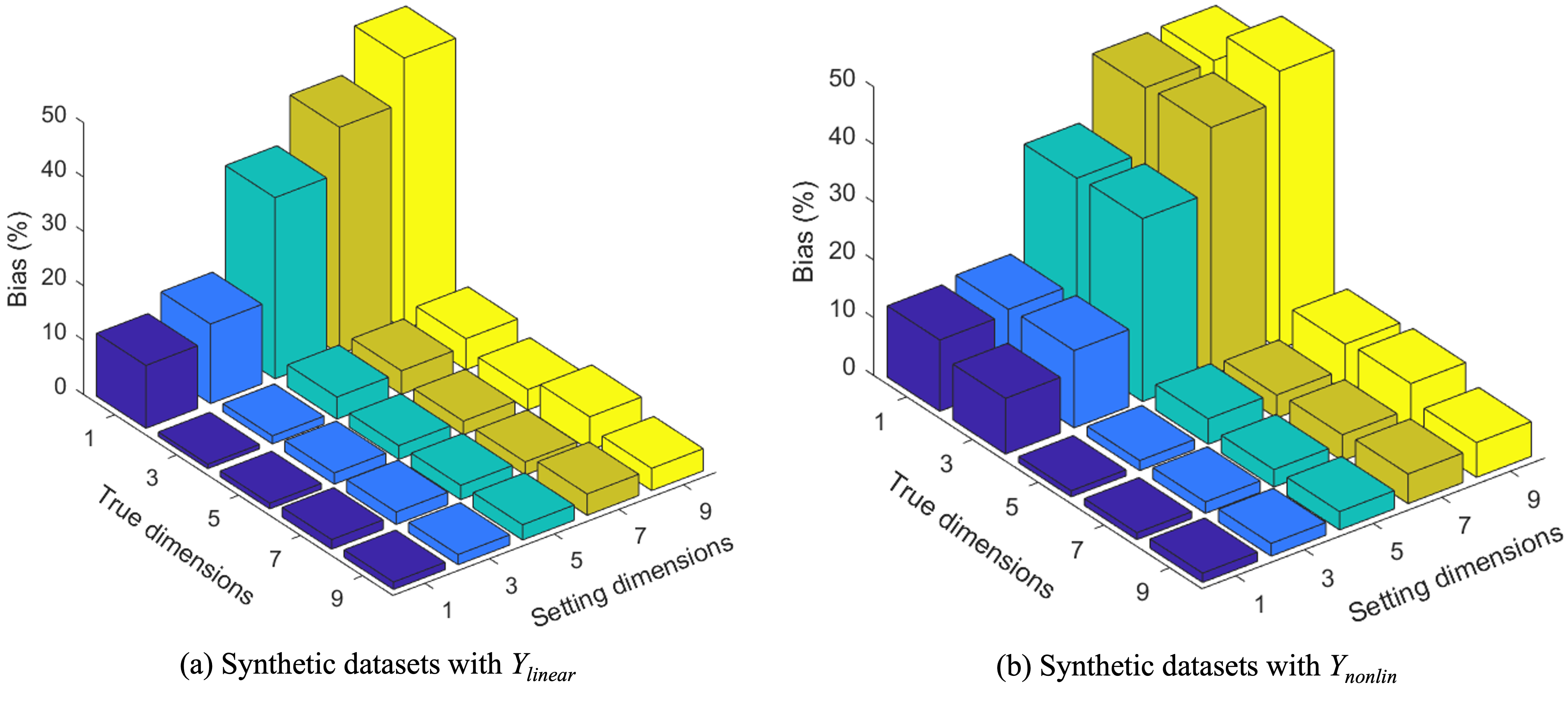}
		\caption{Estimation bias on both types of synthetic datasets. `True dimensions' refer to the dimensions of $\mathbf{L}$, $\mathbf{F}$, and $\mathbf{Z}$ in the data, and `Setting dimensions' correspond to the parameters of  $|\mathbf{L}|$, $|\mathbf{F}|$, and $|\mathbf{Z}|$ in the DLRCE algorithm. }
		\label{fig:dims}
	\end{figure} 
	
\subsection{Ablation Study}
	Next, we examine the impact of three latent representations $\mathbf{L}$, $\mathbf{F}$, and $\mathbf{Z}$ on the performance of DLRCE. To do this, we set the dimensions of $(\mathbf{L}, \mathbf{F}, \mathbf{Z})$ to (1,0,0), (0,1,0), (0,0,1), (1,1,0), (1,0,1), (0,1,1), and (1,1,1), respectively. We conduct a series of experiments on both types of synthetic datasets with a sample size of 10k, generated using the same data generation process described in Section~\ref{subsec:synthe}. Figure~\ref{fig:radar} illustrates the capability of each latent representation in terms of estimation bias using a radar chart. For example, in Figure~\ref{fig:radar} (a), the DLRCE performances achieve the smallest estimation bias when the dimensions of $(\mathbf{L}, \mathbf{F}, \mathbf{Z})$ are set to (1,1,1). It is worth noting that $(\mathbf{L}, \mathbf{F}, \mathbf{Z})=(0,1,1)$ yields the second smallest estimation bias, consistent with the conclusion in Theorem~2. Moreover, all three latent representations contribute to bias reduction, with $\{\mathbf{Z, F}\}$ contributing the most.

\begin{figure}[t]
	\centering
	\includegraphics[scale=0.388]{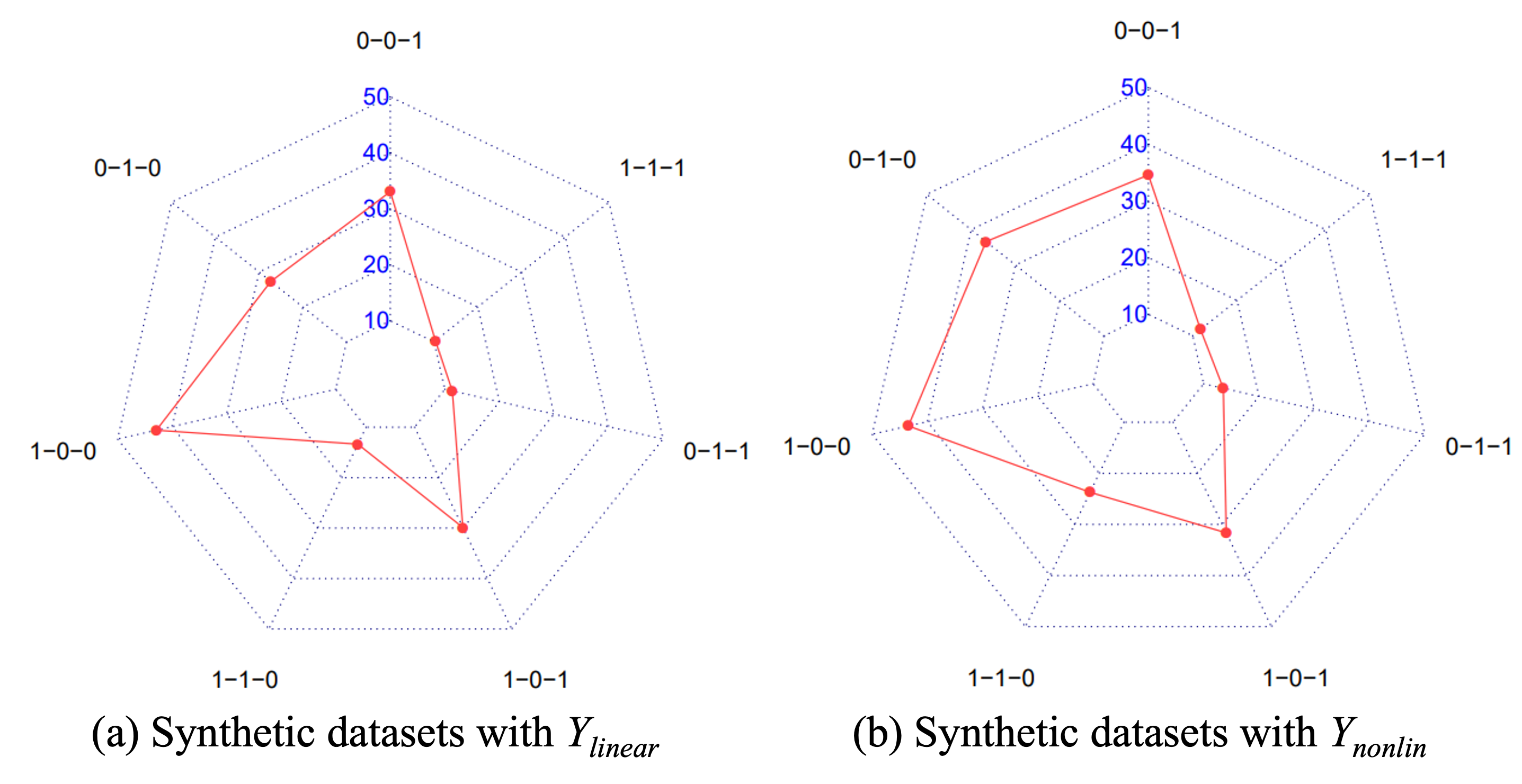}
	\caption{Radar charts for DLRCE’s capability in learning and disentangling the latent representations. Each vertex on the polygons denotes the latent representations’ dimensions. For example, 0-1-1 implies that $(\mathbf{L}, \mathbf{F}, \mathbf{Z})$=(0, 1, 1), \ie $|\mathbf{L}|=0$. }
\label{fig:radar}
\end{figure}

\subsection{Experiments on Three Real-World Datasets}
	In this section, we assess the performance of DLRCE against the above-mentioned comparisons on three real-world datasets, Schoolingreturns~\cite{card1993using}, Cattaneo2~\cite{cattaneo2010efficient} and Sachs~\cite{sachs2005causal} for which the empirical causal effects are available in the literature. The details of the three datasets are described below. 
	
\noindent \textbf{Schoolingreturns Dataset}.
	This dataset consists of 3,010 records and 19 variables~\cite{card1993using}. The treatment variable is the education level of a person. The outcome variable is raw wages in 1976 (in cents per hour). The goal of collecting this dataset is to study the causal effect of the education level on wages.  The estimated $ATE(W, Y) = 0.1329$ with 95\% confidence interval (0.0484, 0.2175) from the works~\cite{verbeek2008guide} as the reference causal effect.  

\noindent \textbf{Cattaneo2 Dataset}. 
	The Cattaneo 2~\cite{cattaneo2010efficient} is widely employed to investigate the ATE of maternal smoking status during pregnancy ($W$) on a baby's birth weight (in grams)\footnote{It can be downloaded from the site: \url{http://www.stata-press.com/data/r13/cattaneo2.dta}}.
	Cattaneo2 consists of the birth weights of 4,642 singleton births in Pennsylvania, USA~\cite{almond2005costs,cattaneo2010efficient}. Cattaneo2 contains 864 smoking mothers ($W$=1) and 3,778 nonsmoking mothers ($W$=0). The dataset contains several covariates: mother's age, mother's marital status, an indicator for the previous infant where the newborn died, mother's race, mother's education, father's education, number of prenatal care visits, months since last birth, an indicator of firstborn infant and an indicator of alcohol consumption during pregnancy. The authors~\cite{almond2005costs} found a strong negative effect of maternal smoking on the weights of babies, namely about $200g$ to $250g$ lighter for a baby with a mother smoking during pregnancy by statistical analysis on all covariates.

\noindent \textbf{Sachs Dataset}.
	The dataset contains 853 samples and 11 variables~\cite{sachs2005causal}. The treatment is $Erk$ (the manipulation of concentration levels of a molecule). The outcome is the concentration of $Akt$. In this work, we take the reported $ATE(W, Y) =1.4301$  with 95\% confidence interval (0.05, 3.23) in the work~\cite{cheng2023discovering} as the reference causal effect.

\begin{table}[t]
\centering
\setlength\tabcolsep{10pt}
  \caption{The estimated $\hat{ATE}$ on the three real-world datasets. Note that the estimated $\hat{ATE}$ by our DLRCE falls in the empirical causal effect interval on all three real-world datasets. The estimated $\hat{ATE}$ is within the empirical interval marked in boldface.}
	\begin{tabular}{|c|ccc|}
			\hline
			\multirow{2}{*}{Method} & \multicolumn{3}{c|}{Datasets}                                              \\ \cline{2-4} 
			& \multicolumn{1}{c|}{Schoolingreturns} & Cattaneo2  & \multicolumn{1}{|c|}{Sachs}  \\ \hline	
LDML    & \multicolumn{1}{c|}{-0.045}    & -170.179   & \multicolumn{1}{|c|}{36.118}   \\ \hline
SLDML   & \multicolumn{1}{c|}{-0.504}    & -153.859  & \multicolumn{1}{|c|}{152.900}  \\ \hline
KDML          & \multicolumn{1}{c|}{-0.021}    & -146.824   & \multicolumn{1}{|c|}{19.360}  \\ \hline
    X-Learner & \multicolumn{1}{c|}{\textbf{0.161}}  & \textbf{-230.61}   & \multicolumn{1}{|c|}{18.661}  \\ \hline	
    R-Learner & \multicolumn{1}{c|}{-0.020}  & \textbf{-234.96}   & \multicolumn{1}{|c|}{24.072}  \\ \hline
      LDRLearner          & \multicolumn{1}{c|}{-0.020}    & -179.853   & \multicolumn{1}{|c|}{37.400}  \\ \hline
		CFDML             & \multicolumn{1}{c|}{-0.040}         & \textbf{-241.436}         & \multicolumn{1}{|c|}{25.774}   \\ \hline
			CEVAE                   & \multicolumn{1}{c|}{0.026}     & \textbf{-221.234}  & \multicolumn{1}{|c|}{\textbf{0.254}}   \\ \hline
			TEDVAE                & \multicolumn{1}{c|}{0.231}     & \textbf{-235.325}     & \multicolumn{1}{|c|}{\textbf{0.255}}   \\ \hline
			DLRCE                     & \multicolumn{1}{c|}{\textbf{0.101}}     & \textbf{-226.448}    & \multicolumn{1}{|c|}{\textbf{1.278}} \\ \hline
		\end{tabular}
  \label{tab:realdata}
	\end{table}

\noindent  \textbf{Results.} We report the results on the three real-world datasets in Table~\ref{tab:realdata}. From Table~\ref{tab:realdata}, we can see that (1) the estimated $\hat{ATE}$s by DLRCE on three real-world datasets are within the empirical intervals respectively. (2) The estimated $\hat{ATE}$ by X-learner on Schoolingreturns, by X-learner, R-learner, CFDML, CEVAE and TEDVAE on Cattaneo2, and by CEVAE and TEDVAE on Sachs are within the empirical intervals, but these methods do not produce estimates within the confidence intervals for all three data sets.  The other methods fail to obtain an estimation within the empirical intervals on any of the  three datasets. (3) The estimates of LDML, SLDML, KDML, R-learner, LDRLearner and CFDML on Schoolingreturns are negative which is opposite to a positive estimate in the literature~\cite{verbeek2008guide}. (4) The estimated $\hat{ATE}$s on Sachs by Machine learning based estimators, such as LDML, SLDML, KDML, X-learner, R-learner, LDRLearner and CFDML, are far away from the empirical interval (0.05, 3.23).

 In a word, the proposed DLRCE algorithm performs better than the stat-of-the-art causal effect estimators on the three real-world datasets. This further confirms the potential applicability of DLRCE in real-world applications.

\section{Related Work}
\noindent	\textbf{Machine learning for causal effect estimation}. Causal effect estimations from observational data have received extensive attention from the artificial intelligence and statistics communities~\cite{pearl2009causality,imbens2015causal,hernan2020causal,guo2020survey,cheng2022data}. For instance, matching methods~\cite{rubin1996matching,abadie2006large} and tree-based methods~\cite{chipman2010bart,athey2016recursive,athey2019generalized,wager2018estimation} have been developed to address confounding bias in causal effect estimation from observational data. Additionally, meta-learners~\cite{kunzel2019metalearners} have also been studied for estimating the average treatment effect (ATE) and conditional average treatment effect (CATE) from observational data.
	
\noindent	\textbf{Representation learning for causal effect estimation}. Recently, representation learning methods~\cite{shalit2017estimating,yoon2018ganite,guo2020survey} have been applied to causal effect estimation, but they often rely on the unconfoundedness assumption~\cite{imbens2015causal}. For example, Shalit et al. proposed a balanced representation learning method for counterfactual regression (CFRNet)~\cite{shalit2017estimating}. Yoon \etal first used a GAN model to learn representations for causal effect estimation. Different from these methods, our proposed DLRCE algorithm addresses the challenging problem of confounding $M$-bias variable in causal effect estimation.
	
\noindent \textbf{Proxy variables for causal effect estimation}. Proxy variables are the measured covariates that are at best of the true underlying confounding mechanism~\cite{louizos2017causal,kallus2018causal,miao2018identifying}.  Kallus \etal~\cite{kallus2018causal} proposed to infer the confounders from proxy variables by using matrix factorisation. Miao \etal~\cite{miao2018identifying} proposed the general conditions for causal effects identification using more general proxies, but they did not propose a practical data-driven method. CEVAE~\cite{louizos2017causal} uses the VAE model to learn the representations from proxy variables for causal effect estimation. However, CEVAE fails to deal with the confounding $M$-bias problem in data studied in this work as shown in our experiments. To the best of our knowledge, our DLRCE algorithm is the first work to solve the problem of confounding $M$-bias variable using the disentanglement of representation learning techniques.  
	
\section{Conclusion}
\label{sec:discon}
In this paper, we identify a challenging problem in estimating causal effects from observational data in the presence of latent confounders, \ie  the problem of confounding $M$-bias as shown in the causal DAG in Fig.~\ref{fig:simpledags} (c). Existing methods tackle confounding bias through balanced representation learning or covariate adjustment, but are unable to handle the problem of confounding $M$-bias, and lead to biased causal effect estimation as shown in our experiments.  To address this problem, we propose a novel disentangled representation learning framework, the DLRCE algorithm for causal effect estimation from observational data in the presence of latent confounders. DLRCE learns three sets of latent representations from proxy variables to adjust for both confounding bias and $M$-bias. Extensive experiments on synthetic and three real-world datasets demonstrate that DLRCE outperforms existing causal effect estimation methods for ATE  and CATE estimation in datasets with both types of biases. The proposed method shows promise in causal effect estimation in real-world datasets and opens up avenues for addressing complex confounding scenarios in causal inference.
 	
\section{acknoledgement}
This work was supported by the Chinese Central Universities (grant number: 2662023XXPY004) and the Australian Research Council (grant number: DP230101122).
		
\bibliographystyle{IEEEtran}
\bibliography{icdm2023}

\end{document}